\newtheorem{assumption}{Assumption}
\begin{document}

%%
%% The "title" command has an optional parameter,
%% allowing the author to define a "short title" to be used in page headers.
% \title{Disentangled Graph Autoencoder for Estimating Treatment Effects from Networked Observational Data}
\title{Disentangled Graph Autoencoder for Treatment Effect Estimation}
%\title{Causal Effect Estimation With Disentangled Variational Graph Autoencoder}
% \title{Estimating Treatment Effects using Disentangled Variational Graph Autoencoder}
% \title{Treatment Effects Estimation on Networked Observational Data using Disentangled Variational Graph Autoencoder}
% \title{Disentangled Variational Graph Autoencoder for Learning Individual Treatment Effects from Networked Observational Data}
%Disentangled Variational Graph Autoencoder for Learning Individual Treatment Effects from Networked Observational Data
% Disentangled Variational Graph Autoencoder for Individual Treatment Effect Estimation on
% Networked Observational Data也可以
%(参考IGNITE仿写溢出）
% Disentangled Representation via Variational AutoEncoder for Continuous Treatment Effect Estimation
% Disentangled latent factors models in counterfactual reasoning for individual treatment effects estimation
%%
%% The "author" command and its associated commands are used to define
%% the authors and their affiliations.
%% Of note is the shared affiliation of the first two authors, and the
%% "authornote" and "authornotemark" commands
%% used to denote shared contribution to the research.
\author{Di Fan}
\email{fandi@zju.edu.cn}
\orcid{0009-0001-6357-7849}
\affiliation{%
  \institution{School of Mathematical Sciences, Zhejiang
University}
  \streetaddress{866 Yuhangtang rd}
  \city{Xihu Qu}
  \state{Hangzhou}
  \country{China}
  \postcode{310012}
}

\author{Renlei Jiang}
\email{jiangrl@zju.edu.cn}
\orcid{}
\affiliation{%
  \institution{School of Mathematical Sciences, Zhejiang University}
  \streetaddress{866 Yuhangtang rd}
  \city{Xihu Qu}
  \state{Hangzhou}
  \country{China}
  \postcode{310012}
}

\author{Yunhao Wen}
\email{wenyunhao@petrochina.com.cn}
\orcid{}
\affiliation{%
  \institution{Petrochina Engineering and Planning Institute}
  % \streetaddress{866 Yuhangtang rd}
  % \city{Xihu Qu}
  % \state{Hangzhou}
    \country{China}
  % \postcode{310012}
}

\author{Chuanhou Gao}
\authornote{Corresponding author: Chuanhou Gao
}
\email{gaochou@zju.edu.cn}
\orcid{0000-0001-9030-2042}
\affiliation{%
  \institution{School of Mathematical Sciences, Zhejiang
University}
  \streetaddress{866 Yuhangtang rd}
  \city{Xihu Qu}
  \state{Hangzhou}
  \country{China}
  \postcode{310012}
}

\renewcommand{\shortauthors}{Fan et al.}

%%
%% The abstract is a short summary of the work to be presented in the
%% article.
\begin{abstract}

Treatment effect estimation from observational data has attracted significant attention across various research fields. However, many widely used methods rely on the unconfoundedness assumption, which is often unrealistic due to the inability to observe all confounders, thereby overlooking the influence of latent confounders. To address this limitation, recent approaches have utilized auxiliary network information to infer latent confounders, relaxing this assumption. However, these methods often treat observed variables and networks as proxies only for latent confounders, which can result in inaccuracies when certain variables influence treatment without affecting outcomes, or vice versa. This conflation of distinct latent factors undermines the precision of treatment effect estimation. To overcome this challenge, we propose a novel disentangled variational graph autoencoder for treatment effect estimation on networked observational data. Our graph encoder disentangles latent factors into instrumental, confounding, adjustment, and noisy factors, while enforcing factor independence using the Hilbert-Schmidt Independence Criterion. Extensive experiments on multiple networked datasets demonstrate that our method outperforms state-of-the-art approaches.
\end{abstract}

%%
%% The code below is generated by the tool at http://dl.acm.org/ccs.cfm.
%% Please copy and paste the code instead of the example below.
%%
\begin{CCSXML}
<ccs2012>
   <concept>
       <concept_id>10010147.10010178.10010187.10010192</concept_id>
       <concept_desc>Computing methodologies~Causal reasoning and diagnostics</concept_desc>
       <concept_significance>500</concept_significance>
       </concept>
 </ccs2012>
\end{CCSXML}

\ccsdesc[500]{Computing methodologies~Causal reasoning and diagnostics}
%%
%% Keywords. The author(s) should pick words that accurately describe
%% the work being presented. Separate the keywords with commas.
\keywords{Causal inference, individual treatment effect, networked observational data, variational graph autoencoder, disentangled representation}

\received{20 February 2007}
\received[revised]{12 March 2009}
\received[accepted]{5 June 2009}

%%
%% This command processes the author and affiliation and title
%% information and builds the first part of the formatted document.
\maketitle
\section{Introduction}
Currently, research on causal effects between different variables has received increasing attention. Among these, learning individual treatment effects of a treatment on an outcome is a fundamental question encountered by numerous researchers, with applications spanning various domains, including education \cite{ding2010estimating}, public policy \cite{athey2016recursive}, 
%heckman2001policy,
economics \cite{zhang2021causal},
%bottou2013counterfactual,gu2021estimating
and healthcare \cite{shalit2017estimating}. For example, in the medical scenario, physicians seek to determine which treatment (such as which medication) is more beneficial for a patient's recovery \cite{wu2022learning}. This naturally raises a question: how can we accurately infer outcome if an instance were to receive an alternative treatment? This relates to the well-known problem of \textit{counterfactual outcome prediction} \cite{pearl2009causality}. By predicting counterfactual outcomes, we can accurately estimate each individual's treatment effect, known as \textit{individual treatment effect (ITE)} \cite{rubin2005causal,shalit2017estimating}, thereby assisting decision-making.

Randomized controlled trials (RCTs) are the gold standard method for learning causal effects \cite{pearl2009causality}.
% zhang2004randomized
In these trials, instances (experimental subjects) are randomly assigned to either the treatment or control group.
% The gold standard method for learning causal effects is to conduct randomized controlled trials (RCTs) \cite{pearl2009causality,zhang2004randomized}, where instances (experimental subjects) are randomly assigned to treatment and control groups.
However, this is often costly, unethical, or even impractical \cite{yao2021survey}. 
%cheng2020practical,
Fortunately, the rapid increasing expansion of big data in many fields offers significant opportunities for causal inference research \cite{yao2021survey}, 
% winship1999estimation,
as these observational datasets are readily available and usually contain a large number of examples. Thus, we often concentrate on estimating treatment effects from observational data. 
In observational studies, treatment often depends on specific attributes of an instance, leading to selection bias \cite{imbens2015causal}. In the medical scenario above, socioeconomic status influences both medication choices and patient recovery. Higher socioeconomic status may increase access to expensive medications and positively impact health. Identifying and controlling for confounding factors (i.e., those affecting both treatment and outcome) is crucial for accurate predictions and presents the main challenge in learning ITE from observational data \cite{pearl2009causal,yao2021survey}. To address confounders, most existing methods rely on unconfoundedness assumption \cite{johansson2016learning,shalit2017estimating,yao2018representation}, 
%hill2011bayesian,
meaning all confounders are measurable and embedded within observed features. However, this assumption is often unrealistic, as not all confounders can be measured. To relax this assumption, \citet{bennett2019policy} proposed using proxy variables to account for latent confounders.

Building on this idea, recent studies have incorporated network information, such as social networks, along with the observational features of units, to improve the accuracy of ITE estimation. In these datasets, instances are intrinsically linked by auxiliary network structures, often represented as an adjacency matrix $\mathbf{A}$ (as defined in Section \ref{section:Preliminaries}), such as user-related social networks, commonly referred to as \textit{networked observational data} \cite{guo2020learning,huang2023modeling}.  Unlike traditional observational data, utilizing the dependencies inherent in network data necessitates models that can effectively combine both network structure and observed variables. Several ITE estimation frameworks have been developed for networked observational data \cite{veitch2019using,guo2020learning,guo2021ignite,chu2021graph}, which typically leverage both network structures and noisy, measurable observed variables as proxy variables to learn and control for latent confounders.
% To capture the presence of hidden confounders, recent studies have explored the use of network information, such as social network, in addition to traditional i.i.d observational data, in order to achieve more accurate ITE estimation. Instances in the dataset are intrinsically linked by auxiliary network structures, such as user-linked social networks. This type of data is typically referred to as \textit{networked observational data} \cite{guo2020learning,huang2023modeling}. For networked observational data, several ITE estimation frameworks have been developed in recent years \cite{veitch2019using,guo2020learning,guo2021ignite,chu2021graph}, which primarily leverage the network structure along with noisy, measurable observed variables as two sets of proxy variables to aid in learning and controlling for latent confounders.
For instance, socioeconomic status can be inferred from easily measurable variables (e.g., postal codes, annual income) combined with social network patterns (e.g., community affiliation). While these methods have shown empirical success, they assume that all proxy variables contribute to latent confounding factors. However, this assumption is incorrect, as some factors affect only treatment, others only outcomes, or may even be noise. As a result, these methods fail to accurately identify latent factors. In patient data, for example, age and socioeconomic status influence both treatment and outcome, acting as confounding factors (confounders); the attending physician affects only treatment, referred to as an instrumental factor; gene expression and air temperature affect only the outcome, referred to as adjustment factors; and information such as names and contact details serve as noise factors. Using all patient features and network information solely to learn latent confounding factors introduces new biases \cite{abadie2006large, haggstrom2018data}. Furthermore, these methods often attempt to balance confounding factor representations by minimizing the discrepancy between treatment and control groups, thus reducing selection bias \cite{guo2020learning,guo2021ignite}. However, in practice, feature distributions between groups often differ significantly, and strictly enforcing balance may undermine the validity of treatment effect estimation. As we will discuss later, a more reasonable approach is to balance the distribution of adjustment factor representations across groups. Overall, explicitly learning disentangled representations for these four types of latent factors is crucial for accurate ITE estimation in networked observational data.
% In observational studies, we represent the observed proxy variables $X$, the unobserved latent factors $Z$, and the causal framework between treatment $T$ and outcome $Y$ as shown in Figure 1. Without loss of generality, we assume that the latent factors $Z$ include instrumental factors $z_t$ that affect treatment but not the outcome, adjustment factors $z_y$ that influence the outcome but not treatment, and confounding factors $z_c$ that simultaneously affect both treatment and outcome. Additionally, $z_o$ represents noise factors that are unrelated to either treatment or outcome.

To address the aforementioned challenges, we present a novel generative framework 
% based on the Variational Graph Autoencoder (VGAE) \cite{kipf2016variational} 
for estimating individual treatment effects on networked observational data. We name our model Treatment effect estimation on Networked observational data by Disentangled Variational Graph Autoencoder (\textit{TNDVGA}), which can effectively infer latent factors from observed variables and auxiliary network information using a graph autoencoder, while employing the Hilbert-Schmidt Independence Criterion (HSIC) independence constraint to disentangle these factors into four mutually exclusive sets, thereby improving treatment effect estimation.
TNDVGA builds on the Variational Graph Autoencoder (VGAE) \cite{kipf2016variational}, with specific modifications for ITE estimation. It introduces multiple encoder channels to capture different latent factors and incorporates specialized regularization terms, including factual outcome prediction, treatment prediction, and balanced representation, to enhance the disentangling process and improve overall accuracy.hese innovations enable TNDVGA to more effectively leverage network structure information to learn distinct latent factors, thereby enhancing ITE estimation. Our main contributions are:
\begin{itemize}
    % \item We investigate an important research issue of learning disentangled latent factors to estimatie individual treatment effects from networked observational data.
    \item We propose \textit{TNDVGA}, a framework for learning individual treatment effects from networked observational data. It simultaneously learns latent factor representations from observed variables and auxiliary network information, while disentangling these factors to improve the accuracy of treatment effect estimation.
    \item We incorporate the kernel-based Hilbert-Schmidt Independence Criterion to evaluate the dependence between different latent factor representations. This independence regularization is jointly optimized with other components of the model within a unified framework, facilitating the learning of independent, disentangled representations.
    \item We conduct extensive experiments to validate the effectiveness of our proposed framework TNDVGA. Results on multiple datasets indicate that TNDVGA achieves state-of-the-art performance, significantly outperforming baseline methods.
\end{itemize}

The rest of this article is organized as follows. The related work is reviewed in Section \ref{sec:Related work}. Section \ref{section:Preliminaries} introduces the technical preliminaries and problem statement. Section  \ref{sec:Methodology} describes the details of our proposed framework. We presents comprehensive experimental results of our model's performance on different datasets in Section \ref{sec:Experiments}. Finally, Section \ref{sec:conclusion and future work} concludes our work and suggests directions for future research.
\section{Related work}\label{sec:Related work}
Three aspects of related work are introduced in this section: (1) learning ITE from i.i.d observational data; (2) learning ITE from networked observational data; and (3) disentangled representations for treatment effect estimation.
\paragraph{Learning ITE from i.i.d observational data}
Due to the high cost and feasibility issues of randomized experiments, there has been growing interest in estimating individual-level causal effects from observational data, particularly with the rise of big data. BART \cite{chipman2010bart} utilizes dimensionally adaptive random basis functions for causal effect estimation, while Causal Forest \cite{wager2018estimation}, an extension of Breiman's random forest, estimates heterogeneous treatment effects nonparametrically. With the development of neural networks, techniques such as CFR \cite{shalit2017estimating} have emerged, leveraging representation learning to estimate individual treatment effects by mapping features to a latent space, capturing confounders, and minimizing prediction error. However, these methods rely on the strong ignorability assumption, which overlooks latent confounders and is often unrealistic in real-world observational studies. To relax this assumption, approaches like CEVAE \cite{louizos2017causal} estimate both the latent confounder space and causal effects simultaneously, while Deep-Treat \cite{atan2018deep} uses a bias-removing autoencoder and policy optimization network to derive balanced representations. Intact-VAE \cite{wu2021intact} treated the confounder as a latent variable in representation learning and proposed a VAE-based method to identify treatment effects. However, these methods do not account for the potential benefit of network structural information in observational data, which can improve ITE prediction accuracy.

\paragraph{Learning ITE from networked observational data} Recently, the emergence of networked observational data in various real-world tasks has prompted several studies to relax strong ignorability assumption by utilizing network information among different instances, where the network also serves as a proxy for unobserved confounders. NetDeconf \cite{guo2020learning} utilized network information and observed features to identify patterns of latent confounders, enabling the learning of valid individual causal effects from networked observational data. CONE \cite{guo2020counterfactual} further employed Graph Attention Networks (GAT) to integrate network information, thereby mitigating hidden confounding effects. IGNITE \cite{guo2021ignite} introduced a minimax game framework that simultaneously balances representations and predicts treatments to learn ITE from networked observational data. GIAL \cite{chu2021graph} leveraged network structure to capture additional information by identifying imbalances within the network for estimating treatment effects. \citet{thorat2023estimation} utilized network information to mitigate hidden selection bias in the estimation of ITE under networked observational studies with multiple treatments. 
However, these studies uniformly use all feature, including network information, to infer latent confounding factors without accounting for the presence of other factors, which may introduce estimation bias. Our work extends their approach by considering the distinct roles of various factors and employing disentangling techniques to effectively address this issue.

There is another line of work that investigates spillover effects or interference. In a network, the treatment administered to one instance may influence the outcomes of its neighbors. 
% {For instance, a patient's social network patterns may reflect her socioeconomic status, but her health is unlikely to be causally affected by the treatment assigned to her neighbors in the social network.} 
This phenomenon is known as spillover effects or interference \cite{arbour2016inferring,huang2023modeling}. 
% rakesh2018linked,
Unlike these studies, we follow the assumption by \citet{guo2020learning} and \citet{veitch2019using} that conditioning on latent confounders separates each individual's treatment and outcome from those of others. We aim to exploit the network structure to learn different latent factors for accurately estimating ITE.

\paragraph{Disentangled representations for treatment effect estimation} 
From the perspective of causal representation learning, learning disentangled representations is one of the challenges in machine learning \cite{scholkopf2021toward}. 
% The disentangled representation of latent factors derived from observational data can reduce the influence of instrumental factors on outcome prediction, thereby mitigating selection bias and significantly improving the accuracy of treatment effect estimation \cite{hassanpour2019learning,kuang2020causal}. 
Early methods primarily focused on variable decomposition \cite{kuang2017treatment,kuang2020data}, exploring treatment effect estimation by considering only adjustment variables and confounders as latent factors. This restricted approach resulted in imprecise confounder separation and hindered accurate estimation of individual treatment effects.
Subsequently, many methods focused on decomposing pre-treatment variables into instrumental variables, confounding variables, and adjustment factors \cite{hassanpour2019learning,wu2022learning,cheng2022learning}. 
More recently, methods based on Variational Autoencoders (VAE) \cite{kingma2013auto} have been proposed to address the disentanglement challenge in individual treatment effect estimation. TEDVAE \cite{zhang2021treatment} employed VAE to separate latent variables with a regularization term for treatment and outcome reconstruction. TVAE \cite{vowels2021targeted} integrated noise factors and introduced target learning regularization for ITE estimation. EDVAE \cite{liu2024edvae} tackled disentanglement from both data and model perspectives. VGANITE \cite{bao2022learning} combined VAE and Generative Adversarial Network (GAN) \cite{goodfellow2014generative} to disentangle factors into three distinct sets.
However, these studies primarily focus on estimating individual treatment effects from independent observational data. Given the importance of disentanglement in ITE estimation, it is essential to incorporate this approach when estimating ITE in network settings. Our model builds upon and extends these existing methods by enabling the use of network structure information while integrating disentanglement concepts to improve the accuracy of ITE estimation.

A method related to disentanglement approaches is instrumental variable (IV) regression, which addresses unobserved confounding through a two-stage process \cite{hartford2017deep,zhao2024networked}. 
% wu2022instrumental,
However, it mainly focuses on the instrumental variable and neglects other latent factors, limiting its ability to capture the full complexity of causal relationships and reducing its effectiveness. In our work, we comprehensively consider different latent factors to achieve accurate ITE estimation.

\section{Preliminaries}\label{section:Preliminaries}
In this section, we first introduce the notations used in this article. We then outline the problem statement by providing the necessary technical preliminaries.
% \paragraph{Notations} In this work, scalars are denoted by unbold lowercase letters (e.g., $t$), vectors by bold lowercase letters (e.g., $\mathbf{x}$), and matrices by bold uppercase letters (e.g., $\mathbf{A}$). We denote the $(i, j)$-th entry of matrix $\mathbf{A}$ as $\mathbf{A}_{ij}$.

\textbf{Notations.} Throughout this work, we use unbold lowercase letters (e.g., $t$) to denote scalars, bold lowercase letters (e.g., $\mathbf{x}$) to represent vectors, and bold uppercase letters (e.g., $\mathbf{A}$) for matrices. The $(i, j)$-th entry of a matrix $\mathbf{A}$ is denoted by $\mathbf{A}_{ij}$.

\textbf{Networked observational data.} In the network observational data, we define the features (covariates) of $i$-th instance as $\mathbf{x}_i \in \mathbb{R}^k$, the treatment as $t_i$, and the outcome as $y_i \in \mathbb{R}$. 
% Following the setup of \cite{shalit2017estimating,yao2018representation}, 
We concentrate on cases where the treatment variable is binary, specifically $t_i \in \{0, 1\}$. We assume that all instances are connected through a network, represented by an adjacency matrix $\mathbf{A}$. The network is assumed to be undirected, with all edge weights equal\footnote{This work can be extended to weighted undirected networks and is also applicable to directed networks by utilizing specialized graph neural networks.}. Let $n$ denotes the number of instances, so that $\mathbf{A} \in \{0, 1\}^{n \times n}$. The entry $\mathbf{A}_{ij} = \mathbf{A}_{ji} = 1$ (or 0) indicates the presence (or absence) of an edge between the $i$-th and $j$-th instances. Therefore, the tuple $(\{\mathbf{x}_i, t_i, y_i\}_{i=1}^n, \mathbf{A})$ represents a network observational dataset. We denote $t_i = 1$ and $t_i = 0$ to represent whether the $i$-th instance is in the treatment or control group, respectively, without loss of generality.

\textbf{Individual treatment effects estimation.} Here, we present the background knowledge necessary for learning individual treatment effects. We make the assumption that for each pair of instance $i$ and treatment $t$, there exists a potential outcome $y_i^{t}$, representing the value that $y$ would take if treatment $t$ were applied to instance $i$ \cite{rubin1978bayesian}. Note that only one potential outcome is observable, while the unobserved outcome $y_i^{1 - t_i}$ is typically referred to as the counterfactual outcome. As a result, the observed outcome can be expressed as a function of the observed treatment and potential outcomes, given by $y_i = t_i y_i^1 + (1 - t_i) y_i^0$. Then the ITE for the instance $i$ in the context of networked observational data is defined as follows:
\begin{equation}
    \tau_i = \tau(\mathbf{x}_i, \mathbf{A}) = \mathbb{E}[y_i^1 \mid \mathbf{x}_i, \mathbf{A}] - \mathbb{E}[y_i^0 \mid \mathbf{x}_i, \mathbf{A}],
\end{equation}
which measures the difference between expected potential outcome under treatment and control for the instance $i$. Once ITE has been established, the average treatment effect (ATE) can then be estimated by averaging the ITE across all instances as $\text{ATE} = \frac{1}{n} \sum_{i=1}^{n} \tau_i$. Based on the aforementioned notations and definitions, we formally state the problem.
\begin{definition}[Learning ITEs from Networked Observational Data]
Given the networked observational data \((\{\mathbf{x}_i, t_i, y_i\}_{i=1}^n, \mathbf{A})\), our goal is to use the information from \((\mathbf{x}_i, t_i, y_i)\) and the network adjacency matrix \(\mathbf{A}\) to learn an estimate of the ITE $\tau_i$ for each instance \(i\).
\end{definition}

This paper is based on the following two fundamental assumptions necessary for estimating the individual treatment effect \cite{rosenbaum1983central}:
\begin{assumption}[Stable Unit Treatment Value Assumption (SUTVA)] 
The potential outcomes for one unit are not affected by the treatment assigned to other units.
\end{assumption}
\begin{assumption}[Overlap] 
Each unit has a nonzero probability of receiving either treatment or control given the observed variables, i.e., $0<P(t=1\mid\mathbf{x})<1$.
\end{assumption}
% \subsection{Preliminaries on (Vriational) Graph 
% Auto-Encoder}\label{preliminaries}

\textbf{Variational Graph 
Autoencoder \cite{kipf2016variational}.} Since our model builds upon VGAE, we provide the necessary background here.
% VGAE, introduced by \cite{kipf2016variational}, are specifically designed for unsupervised learning on graph-structured data. A remarkable characteristic is their natural ability to incorporate node features, resulting in a significant enhancement in predictive performance on diverse tasks across many benchmark datasets.
% We will primarily focus on undirected graphs, though the approach can be easily extended to directed graphs. Let us begin by introducing some notations. Consider a graph $G = (V, E, X)$, where $V$ is the set of $N$ nodes, and $E$ is the set of edges. For any distinct nodes $i$ and $j$, if there's an edge connecting them, we denote it as $(i, j) \in E$. The topological structure of the input graph $G$ can be expressed using an adjacency matrix $A \in \mathbb{R}^{N\times N}$ where $A_{i,j}=1$ if $(i, j) \in E$, and $A_{i,j}=0$ otherwise. Its degree matrix is $D$. $X\in \mathbb{R}^{N \times f}$ is the node feature matrix, where each node $i \in V$ has a feature vector $x_i \in \mathbb{R}^f$, and $f$ is the dimension of raw features per node. The matrix for graph embeddings is $Z \in \mathbb{R}^{N\times d}$, with $z_i \in \mathbb{R}^d$ representing the embedding of node $i$.
VGAE is a widely used graph-based generative model designed for unsupervised learning of node representations. It follows an encoder-decoder architecture. The encoder (i.e. inference model), typically implemented as a Graph Neural Network (GNN) \cite{kipf2016semi}, learns a probabilistic distribution over the node embeddings by mapping the graph structure and node features to a latent space. Specifically, the encoder defines the variational posterior \( q(\mathbf{Z} | \mathbf{X}, \mathbf{A}) \), where  \( \mathbf{Z} \in \mathbb{R}^{n\times d}\) represents the matrix of latent node embeddings, $d$ is the latent dimension, $\mathbf{X} \in \mathbb{R}^{n\times k}$ is the feature matrix, and \( \mathbf{A} \) is the graph's adjacency matrix. The latent variable \( \mathbf{Z}  \) are drawn from this posterior distribution, parameterized by the encoder, which is widely recognized for effectively capturing graph dependencies.
The decoder (i.e. generative model) reconstructs the adjacency matrix from the learned latent representations. his reconstruction is based on the inner product of the latent representations for each pair of nodes and is expressed as:
\begin{gather}
p({\mathbf{A}}|\mathbf{Z})=\prod_{i=1}^N\prod_{i=1}^Np({\mathbf{A}}_{ij}|\mathbf{z}_i,z_j),\quad 
p({\mathbf{A}}_{ij}|\mathbf{z}_i,\mathbf{z}_j)=\hat{\mathbf{A}}_{ij}=\sigma(\mathbf{z}_i^T\mathbf{z}_j).
\end{gather}
where \( \sigma \) is the sigmoid function.

The training objective of VGAE is to maximize the Evidence Lower Bound (ELBO) to optimize the variational parameters. The ELBO consists of two terms: the reconstruction term and the regularization term. The goal is to maximize the log-likelihood of the graph's adjacency matrix \( \mathbf{A} \), while regularizing the latent space to follow a prior distribution, typically a standard normal distribution. The objective can be written as:
\begin{equation}
\mathcal{L} = \mathbb{E}_{q(\mathbf{Z} | \mathbf{X}, \mathbf{A})} \left[ \log p(\mathbf{A} | \mathbf{Z}) \right] -  D_{KL}(  q(\mathbf{Z} | \mathbf{X}, \mathbf{A}) || p(\mathbf{Z}) ),
\end{equation}
where the first term is the expected log-likelihood of the reconstructed adjacency matrix \( \mathbf{A} \) given the latent representations \( Z \),
and the second term is the Kullback-Leibler divergence \cite{Kullback1951oninformation} between the variational posterior \( q(\mathbf{Z} | \mathbf{X}, \mathbf{A}) \) and the prior \( p(\mathbf{Z}) \), typically chosen as a standard normal distribution.
Optimizing this objective allows the model to learn effective latent node embeddings for graph data.

% The training objective of VGAE is to recover information about the graph from the embedding matrix $Z$. Therefore, our objective is to estimate the model parameters by maximizing the logarithm of the marginal distribution, which can be lower-bounded by ELBO:
% \begin{align}
%      \log p_\theta(A|X) =& \log \int q_\theta(Z|X,A) \frac{p(Z)p(A|Z)}{q_\theta(Z|X,A)}dZ \\
%     \geq& \int q_\theta(Z|X,A) \log \frac{p(Z)p(A|Z)}{q_\theta(Z|X,A)}dZ \\
%     % =& \mathbb{E}_{Z \sim q_\theta(Z|X,A)} \log p(A|Z) \nonumber\\
%     % &+ \mathbb{E}_{Z \sim q_\theta(Z|X,A)} \log \frac{p(Z)}{q_\theta(Z|X,A)} \\
%     =& \mathbb{E}_{Z \sim q_\theta(Z|X,A)} \log p(A|Z) \nonumber\\
%     &- D_{KL}(q_\theta(Z|X,A)|| p(Z)).\label{Eq:ELBO}
% \end{align}
% Here, the inequality is derived from Jensen's inequality \cite{Weisstein2006jensen}, and $D_{KL}(\cdot||\cdot)$ represents the Kullback-Leibler (KL) divergence \cite{Kullback1951oninformation}. We set $p(Z)$ to be a Gaussian normal distribution, i.e., 
% \begin{equation}
%     p(Z) = \prod_{i=1}^N p(z_i) = \prod_{i=1}^N \mathcal{N}(z_i | 0, \rm{I}).
% \label{Eq:prior}
% \end{equation}
\section{Methodology}\label{sec:Methodology}
In this section, we first present a theorem on the identifiability of the individual treatment effects. We then introduce our TNDVGA framework, which is designed to learn from networked observational data.
\subsection{Identifiability}
We introduce our model TNDVGA for estimating treatment effects, based on the assumption that the observed covariates $\mathbf{x}$ and the network patterns $\mathbf{A}$ can be regarded as generated from four distinct sets of latent factors $\mathbf{z} = (\mathbf{z}_t, \mathbf{z}_c, \mathbf{z}_y, \mathbf{z}_o)$. That is, we assume that $\mathbf{x}$ and $\mathbf{A}$ serve as proxy variables for all latent variables. In this context, $\mathbf{z}_t$ represents latent instrumental factors that influence the treatment but not the outcome, $\mathbf{z}_c$ includes latent confounding factors (latent confounders) that influence both the treatment and the outcome, $\mathbf{z}_y$ consists of latent adjustment factors that impact the outcome without affecting the treatment, and $\mathbf{z}_o$ refers to latent noise factors, which are covariates unrelated to either the treatment or the outcome. The inclusion of $\mathbf{z}_o$ allows for the effective removal of irrelevant variables when estimating $\mathbf{z}_t$, $\mathbf{z}_c$, and $\mathbf{z}_y$, thereby enhancing the accuracy of each latent factor in the learning process.
The proposed causal graph for ITE estimation is shown in Fig. \ref{fig:causal graph}. By explicitly modeling these four latent factors, it highlights that not all variables in the observed set serve as proxy variables for confounding factors, thereby effectively facilitates the learning of various types of unobserved factors.
\begin{figure}[t]
  \centering
  \includegraphics[width=0.26\linewidth]{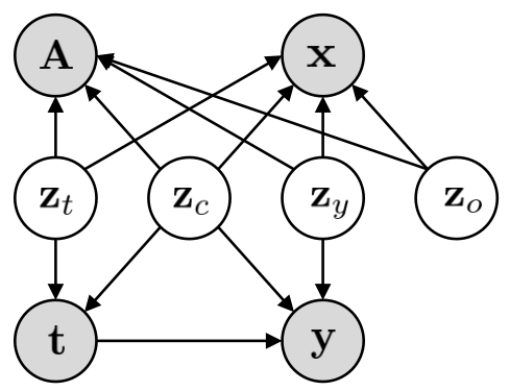}
  \caption{The causal diagram of the proposed TNDVGA. \( \mathbf{x} \) represents the observed variables, \( \mathbf{A} \) denotes the network structure, \( t \) is the treatment, \( y \) is the outcome, \( \mathbf{z}_t \) is latent instrument factors affecting only the treatment, \( \mathbf{z}_c \) is latent confounding factors, \( \mathbf{z}_y \) is latent adjustment factors affecting only the outcome, and \( \mathbf{z}_o \) is the latent noise factors unrelated to both treatment and outcome.
}
  \label{fig:causal graph}
  \Description{causal graph}
\end{figure}
% \vspace{-5mm}

Utilizing network observational data, we formulate and prove the identifiability of individual treatment effects in Theorem \ref{theorem:Identifiability of ITE}.
Assumption \ref{assum:unconfoundedness} is a relaxed version of the unconfoundedness assumption commonly used in causal inference \cite{rosenbaum1983central}, which typically assumes no latent confounders. In practice, depending on the specific context of the problem, this assumption is often relaxed \cite{bao2022learning}, such as in our case, to allow for the presence of latent confounders.

\begin{assumption}[Unconfoundedness (Ignorability)] \label{assum:unconfoundedness}
Treatment assignment is independent of the potential outcomes when conditioning on the latent confounding factors, i.e., $t \! \perp \!\!\! \perp(y^0,y^1)\mid \mathbf{z}_c$. 
\end{assumption}

\begin{theorem}[Identifiability of ITE]\label{theorem:Identifiability of ITE}
% If the latent confounding factors $\mathbf{z}_c$ and latent adjustment factors $\mathbf{z}_y$ can be recovered from the networked observational data, the individual treatment effect (ITE) can also be recovered.
If we recover $p(\mathbf{z}_c,\mathbf{z}_y\mid\mathbf{x},\mathbf{A})$ and $p(y\mid t,\mathbf{z}_c,\mathbf{z}_y)$, then the proposed TNDVGA can recover the individual treatment effect from networked observational data.
\end{theorem}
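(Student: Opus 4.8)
The plan is to show that the ITE $\tau_i = \mathbb{E}[y^1 \mid \mathbf{x}_i, \mathbf{A}] - \mathbb{E}[y^0 \mid \mathbf{x}_i, \mathbf{A}]$ can be written entirely in terms of the two quantities assumed recoverable, namely $p(\mathbf{z}_c, \mathbf{z}_y \mid \mathbf{x}, \mathbf{A})$ and $p(y \mid t, \mathbf{z}_c, \mathbf{z}_y)$. First I would fix a treatment value $t \in \{0,1\}$ and expand the conditional mean of the potential outcome by marginalizing over the relevant latent factors, writing
\begin{equation}
\mathbb{E}[y^t \mid \mathbf{x}_i, \mathbf{A}] = \int \mathbb{E}[y^t \mid \mathbf{z}_c, \mathbf{z}_y, \mathbf{x}_i, \mathbf{A}]\, p(\mathbf{z}_c, \mathbf{z}_y \mid \mathbf{x}_i, \mathbf{A})\, d\mathbf{z}_c\, d\mathbf{z}_y.
\end{equation}
This is just the tower property of conditional expectation, so it requires no extra assumptions; it merely introduces the posterior $p(\mathbf{z}_c, \mathbf{z}_y \mid \mathbf{x}, \mathbf{A})$, which is the first recoverable object.

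The main work is to collapse the inner term $\mathbb{E}[y^t \mid \mathbf{z}_c, \mathbf{z}_y, \mathbf{x}_i, \mathbf{A}]$ down to $\mathbb{E}[y \mid t, \mathbf{z}_c, \mathbf{z}_y]$, which is determined by the second recoverable object. I would do this in two steps. The step I expect to be the crux is removing the dependence on $(\mathbf{x}_i, \mathbf{A})$: since the causal graph (Fig.~\ref{fig:causal graph}) posits that $\mathbf{x}$ and $\mathbf{A}$ are generated \emph{from} the latent factors and that the outcome depends only on $\mathbf{z}_c$ and $\mathbf{z}_y$, the potential outcome $y^t$ is conditionally independent of $(\mathbf{x}, \mathbf{A})$ given $(\mathbf{z}_c, \mathbf{z}_y)$, so that $\mathbb{E}[y^t \mid \mathbf{z}_c, \mathbf{z}_y, \mathbf{x}_i, \mathbf{A}] = \mathbb{E}[y^t \mid \mathbf{z}_c, \mathbf{z}_y]$. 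I would justify this carefully by reading off the d-separation structure from the causal diagram. The second step uses the relaxed unconfoundedness Assumption~\ref{assum:unconfoundedness}: because $t \perp\!\!\!\perp (y^0, y^1) \mid \mathbf{z}_c$, and $\mathbf{z}_y$ affects only the outcome (so conditioning on it does not break this independence), we may swap the interventional potential outcome for the observational conditional, giving $\mathbb{E}[y^t \mid \mathbf{z}_c, \mathbf{z}_y] = \mathbb{E}[y^t \mid t, \mathbf{z}_c, \mathbf{z}_y] = \mathbb{E}[y \mid t, \mathbf{z}_c, \mathbf{z}_y]$, where the last equality is the consistency relation $y = t y^1 + (1-t) y^0$.

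Combining these, I obtain the identification formula
\begin{equation}
\mathbb{E}[y^t \mid \mathbf{x}_i, \mathbf{A}] = \int \mathbb{E}[y \mid t, \mathbf{z}_c, \mathbf{z}_y]\, p(\mathbf{z}_c, \mathbf{z}_y \mid \mathbf{x}_i, \mathbf{A})\, d\mathbf{z}_c\, d\mathbf{z}_y,
\end{equation}
and subtracting the $t=0$ expression from the $t=1$ expression yields $\tau_i$ purely as a functional of the two recoverable distributions. The SUTVA and Overlap assumptions enter in the background: SUTVA guarantees that $y^t$ for instance $i$ is well defined and unaffected by other units' treatments (so conditioning on $\mathbf{z}_c$ suffices to decouple individuals, consistent with the stated modeling assumption), while Overlap ensures the conditional $p(y \mid t, \mathbf{z}_c, \mathbf{z}_y)$ is well defined for both $t=0$ and $t=1$ so that neither term in the difference is vacuous. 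The hard part is really the conditional-independence argument in the second paragraph: I would need to state precisely which independencies the causal graph encodes and confirm that conditioning on $\mathbf{z}_y$ (an outcome-only factor) does not reintroduce any back-door path that would invalidate the exchange between potential and observed outcomes.
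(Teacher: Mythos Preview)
Your proposal is correct and follows essentially the same route as the paper's proof: both marginalize over latent factors, invoke the graph's Markov structure to drop irrelevant conditioners, apply Assumption~\ref{assum:unconfoundedness} to introduce conditioning on $t$, and then use consistency to pass from $y^t$ to the observed $y$. The only cosmetic difference is that the paper first integrates over all four latents $(\mathbf{z}_t,\mathbf{z}_c,\mathbf{z}_y,\mathbf{z}_o)$ and removes $\mathbf{z}_t,\mathbf{z}_o$ at the end via the Markov property $y \perp\!\!\!\perp (\mathbf{z}_t,\mathbf{z}_o) \mid t,\mathbf{z}_c,\mathbf{z}_y$, whereas you integrate only over $(\mathbf{z}_c,\mathbf{z}_y)$ from the start and drop $(\mathbf{x},\mathbf{A})$ directly via $y^t \perp\!\!\!\perp (\mathbf{x},\mathbf{A}) \mid (\mathbf{z}_c,\mathbf{z}_y)$.
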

\begin{proof}
%cevae,drvae,dynamic,tkdd,tedvae pearl2009第一个没问题，第二个有问题没积分
According to the aforementioned assumptions and networked observational data,  the potential outcome distribution for any instance $\mathbf{x}$ can be calculated as follows:
\begin{equation}
\begin{split}
     &p(y^t\mid \mathbf{x},\mathbf{A})\\
    &\overset{(i)}{=}\int_{\{ \mathbf{z}_t,\mathbf{z}_c,\mathbf{z}_y,\mathbf{z}_o\}} p(y^t\mid \mathbf{z}_t,\mathbf{z}_c,\mathbf{z}_y,\mathbf{z}_o)p(\mathbf{z}_t,\mathbf{z}_c,\mathbf{z}_y,\mathbf{z}_o\mid\mathbf{x},\mathbf{A})d\mathbf{z}_td\mathbf{z}_cd\mathbf{z}_yd\mathbf{z}_o\\
    &\overset{(ii)}{=}\int_{\{ \mathbf{z}_t,\mathbf{z}_c,\mathbf{z}_y,\mathbf{z}_o\}}p(y^t\mid t,\mathbf{z}_t,\mathbf{z}_c,\mathbf{z}_y,\mathbf{z}_o)p(\mathbf{z}_t,\mathbf{z}_c,\mathbf{z}_y,\mathbf{z}_o\mid\mathbf{x},\mathbf{A})d\mathbf{z}_td\mathbf{z}_cd\mathbf{z}_yd\mathbf{z}_o\\
    &\overset{(iii)}{=}\int_{\{ \mathbf{z}_t,\mathbf{z}_c,\mathbf{z}_y,\mathbf{z}_o\}}p(y\mid t,\mathbf{z}_t,\mathbf{z}_c,\mathbf{z}_y,\mathbf{z}_o)p(\mathbf{z}_t,\mathbf{z}_c,\mathbf{z}_y,\mathbf{z}_o\mid\mathbf{x},\mathbf{A})d\mathbf{z}_td\mathbf{z}_cd\mathbf{z}_yd\mathbf{z}_o\\
    &\overset{(iv)}{=}\int_{\{ \mathbf{z}_c,\mathbf{z}_y\}}p(y\mid t,\mathbf{z}_c,\mathbf{z}_y)p(\mathbf{z}_c,\mathbf{z}_y\mid\mathbf{x},\mathbf{A})d\mathbf{z}_cd\mathbf{z}_y.
    % &\overset{(v)}{=}\mathbb{E}_{ \mathbf{z}_c,\mathbf{z}_y}p(y\mid t,\mathbf{z}_c,\mathbf{z}_y)
\end{split}
\end{equation}
The Equation (i) is a straightforward expectation over $p(\mathbf{z}_t,\mathbf{z}_c,\mathbf{z}_y,\mathbf{z}_o\mid\mathbf{x},\mathbf{A})$, Equation (ii) follows from Assumption \ref{assum:unconfoundedness} based on the conditional independence assumption $t \! \perp \!\!\! \perp(y^0,y^1)\mid \mathbf{z}_c$, Equation (iii) is derived from the commonly used consistency assumption \cite{imbens2015causal}, and Equation (iv) can be obtained from the Markov property $y \! \perp \!\!\! \perp \mathbf{z}_t,\mathbf{z}_o\mid t,\mathbf{z}_c,\mathbf{z}_y$. Thus, if we can model $p(\mathbf{z}_c,\mathbf{z}_y\mid\mathbf{x},\mathbf{A})$ and $p(y\mid t,\mathbf{z}_c,\mathbf{z}_y)$ correctly, then the ITE can be identified.
% , and Equation (v) follows directly from the definition of expectation.
\end{proof}
Previous work by \citet{zhang2021treatment} has derived the proof for identifiability under the assumption of ignorability, based on the inference of relevant parent factors from proxy variables and/or other observed variables. We took inspiration from their concept. However, in contrast to them, our model includes latent noise factors, which improves the composition of the latent variables by bringing it closer to reality. Furthermore, additional network information can be used with the proxy variable $\mathbf{x}$. With these two modifications, we demonstrate the identifiability of ITE in Theorem \ref{theorem:Identifiability of ITE}. Theorem \ref{theorem:Identifiability of ITE} highlights the importance of distinguishing different latent factors and utilizing only the appropriate ones for treatment effect estimation on networked observational data.
\subsection{The proposed framework: TNDVGA}
% \vspace{-5mm}
An overview of the proposed framework, TNDVGA, is shown in Fig. \ref{fig:model structure}, which learns individual treatment effects through networked observational data. The proposed framework consists of the following important components: (1) Learning Disentangled Latent factors; (2) Predicting Potential Outcomes and Treatment Assignments; (3) Enforcing Independence of Latent factors. We will provide a detailed explanation of these three components in the following sections.
\subsubsection{Learning Disentangled Latent factors}
From the theoretical analysis in the previous section, we have seen that eliminating unnecessary factors is essential to effectively and accurately estimating the treatment effect. However, in practice, we do not know the mechanism of generating $\mathbf{x}$ and $\mathbf{A}$ from $\mathbf{z}$ and the mechanism of disentangling $\mathbf{z}$ into different disjoint sets. This requires us to propose a method that can learn to disentangle the latent factors $\mathbf{z}$ and estimate ITE through what the model has learned.
\begin{figure}[t]
  \centering
  \includegraphics[width=\linewidth]{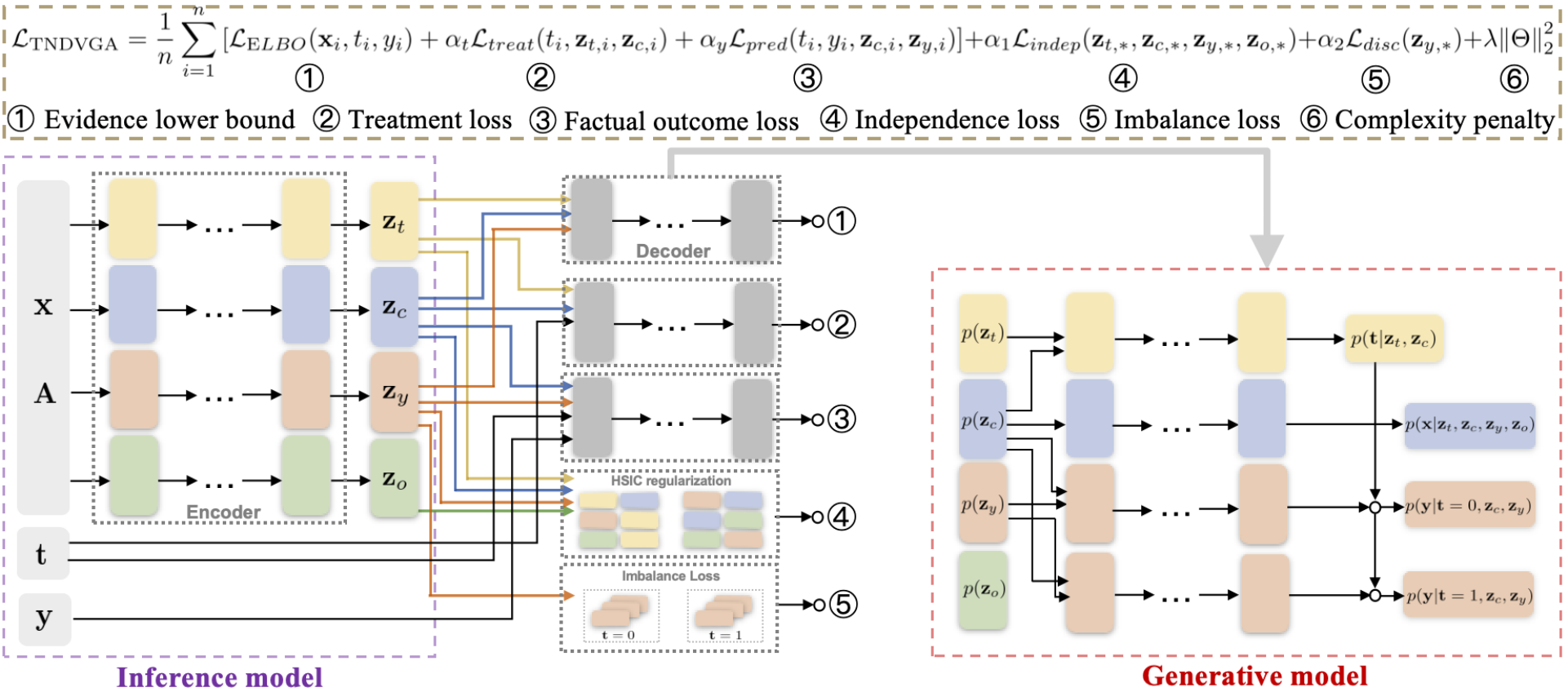}
  \caption{The overall architecture of TNDVGA consists of a generative network and an inference network for disentangling latent factors.}
  \label{fig:model structure}
  \Description{model structure}
\end{figure}
Therefore, we aim to infer the posterior distribution $p_\theta(\mathbf{z}\mid \mathbf{x},\mathbf{A})$ of latent factors $\mathbf{z}$ through the observed covariates $\mathbf{x}$ and network information $\mathbf{A}$, while disentangling $\mathbf{z}$ into latent instrumental factors $\mathbf{z}_t$, confounding factors $\mathbf{z}_c$, adjustment factors $\mathbf{z}_y$, noise factors $\mathbf{z}_o$. Since exact inference is intractable, we employ a variational inference framework to approximate the posterior distribution with a tractable distribution. 
Building upon the VGAE, we construct our model, which accounts for the graph structure inherent in the data. 
For each observed variable \( \mathbf{x} \), we define a multi-dimensional latent variable \( \mathbf{z} \). Furthermore, we incorporate an adjacency matrix \( \mathbf{A} \) into the encoder of the VGAE model, using it within the GNN to impose the structure of the posterior approximation
% Furthermore, we utilize an adjacency matrix \( \mathbf{A} \), which is incorporated into GNN in the encoder to impose the structure of the posterior approximation
\( q_\phi(\mathbf{z} \mid \mathbf{x}, \mathbf{A}) \).
% We adopt the idea of  Variational Graph Autoencoders (VGAEs) \cite{kipf2016variational} to construct our model,
% Proposed by \citet{kipf2016variational}, VGAEs 
% extend Variational Autoencoders (VAEs) 
% which takes into consideration the graph structure in the data. For every observed variable $\mathbf{x}$,  define a multi-dimensional latent variable $\mathbf{z}$. Moreover,  rely on an adjacency matrix $\mathbf{A}$, which is utilized by the Graph Neural Network (GNN) in the encoder to enforce the structure of the posterior approximation $q_\phi(\mathbf{z} \mid \mathbf{x}, \mathbf{A})$. 
As shown in Fig. \ref{fig:model structure}, we use four separate encoders to approximate the variational posterior $q_{\phi_t}(\mathbf{z}_t\mid \mathbf{x},\mathbf{A})$, $q_{\phi_c}(\mathbf{z}_c\mid \mathbf{x},\mathbf{A})$, $q_{\phi_y}(\mathbf{z}_y\mid \mathbf{x},\mathbf{A})$, $q_{\phi_o}(\mathbf{z}_o\mid \mathbf{x},\mathbf{A})$, disentangling the latent variable $\mathbf{z}$ into $\mathbf{z}_t$, $\mathbf{z}_c$, and $\mathbf{z}_y$, $\mathbf{z}_o$, respectively\footnote{Our method does not employ $t$ and $y$ as inputs to the encoder as done in \cite{louizos2017causal}, because we assume that $t$ and $y$ are generated by the latent factors. So, the inference of latent factors relies solely on $\mathbf{x}$. For additional information, see \cite{zhang2021treatment}.}.
Then, these four latent factors are used by the decoder $p_\theta(\mathbf{x} \mid \mathbf{z}_t, \mathbf{z}_c, \mathbf{z}_y, \mathbf{z}_o)$ to reconstruct $\mathbf{x}$, $t$, and $y$\footnote{Note that, as shown in Figure \ref{fig:causal graph}, we obtain the independence property $\mathbf{x}  \! \perp \!\!\! \perp \mathbf{A} \mid \{\mathbf{z}_t, \mathbf{z}_c, \mathbf{z}_y, \mathbf{z}_o\}$. Thus, the 
% original VGAE 
decoder $p_\theta(\mathbf{x} \mid \mathbf{z}_t, \mathbf{z}_c, \mathbf{z}_y, \mathbf{z}_o, \mathbf{A}) = p_\theta(\mathbf{x} \mid \mathbf{z}_t, \mathbf{z}_c, \mathbf{z}_y, \mathbf{z}_o)$. The derivation for $t$ and $y$ is similar.}. Following the standard VGAE design, we select the prior distributions $p(\mathbf{z}_t)$, $p(\mathbf{z}_c)$,  $p(\mathbf{z}_y)$ and $p(\mathbf{z}_o)$ as factorized Gaussian distributions:
\begin{equation}
    \begin{split}
        p(\mathbf{z}_t)= \prod_{j=1}^{d_{\mathbf{z}_t}}\mathcal{N}(\{\mathbf{z}_{t}\}_j\mid 0,1);\quad p(\mathbf{z}_c)= \prod_{j=1}^{d_{\mathbf{z}_c}}\mathcal{N}(\{\mathbf{z}_{c}\}_j\mid 0,1);\\
        p(\mathbf{z}_y)= \prod_{j=1}^{d_{\mathbf{z}_y}}\mathcal{N}(\{\mathbf{z}_{y}\}_j\mid 0,1);\quad p(\mathbf{z}_o)= \prod_{j=1}^{d_{\mathbf{z}_o}}\mathcal{N}(\{\mathbf{z}_{o}\}_j\mid 0,1),
    \end{split}
\end{equation}
where $d_{\mathbf{z}_t}$, $d_{\mathbf{z}_c}$, $d_{\mathbf{z}_y}$, and $d_{\mathbf{z}_o}$ represent the dimensions of latent instrumental, confounding, adjustment, noise factors, respectively. And $\{\mathbf{z}_{t}\}_j$ denotes the $j$-th dimension of $\mathbf{z}_{t}$, and the same applies to $\mathbf{z}_{c}$, $\mathbf{z}_{y}$, and $\mathbf{z}_{o}$. 

The probabilistic representation of the generative model for $\mathbf{x}$, $t$, and $y$ is as follows:
\begin{align}
    p_{\theta_\mathbf{x}}(\mathbf{x} \mid \mathbf{z}_t, \mathbf{z}_c, \mathbf{z}_y, \mathbf{z}_o)&=\prod_{j=1}^{k}\mathcal{N}(\mu_j=f_{1j}(\mathbf{z}_{\{t,c,y,o\}}), \sigma_j^2=f_{2j}(\mathbf{z}_{\{t,c,y,o\}})),\\
    p_{\theta_t}(t \mid \mathbf{z}_t, \mathbf{z}_c)&= Bern(\sigma(f_3(\mathbf{z}_c,\mathbf{z}_t))),\label{eq:generative t}
\end{align}
\begin{equation}
\label{eq:generative y}
    \begin{split}
    p_{\theta_y}(y \mid t,\mathbf{z}_c, \mathbf{z}_y)&=\mathcal{N}(\mu=\hat{\mu},\sigma^2={\hat{\sigma}}^2),\\
        \hat{\mu}=tf_4(\mathbf{z}_c,\mathbf{z}_y)+(1&-t)f_5(\mathbf{z}_c,\mathbf{z}_y);{\hat{\sigma}}^2=tf_6(\mathbf{z}_c,\mathbf{z}_y)+(1-t)f_7(\mathbf{z}_c,\mathbf{z}_y),
    \end{split}
\end{equation}
where $f_1$ to $f_7$ are functions parameterized by fully connected neural networks, $\sigma(\cdot)$ represents the logistic function, and $Bern$ refers to the Bernoulli distribution. The distribution of $\mathbf{x}$ should be chosen based on the dataset, and in our case, we approximate it with a Gaussian distribution, as the data we use consists of continuous variables. Similarly, for the continuous outcome variable $y$, we also parameterize it as a Gaussian distribution, where the mean and variance are defined by two separate neural networks defining $p(y \mid t = 1, \mathbf{z}_c, \mathbf{z}_y)$ and $p(y \mid t = 0, \mathbf{z}_c, \mathbf{z}_y)$, following the two-headed approach proposed by \citet{shalit2017estimating}.

In the inference model, since we input the network information $\mathbf{A}$ into the encoder,
% we design the encoder based on the idea of Variational Graph Autoencoders (VGAEs). Specifically, 
we utilize Graph Convolutional Networks (GCNs) \cite{kipf2016semi} as the encoder to obtain latent factor representations. GCN has been shown to effectively handle non-Euclidean data, such as graph-structured data, across diverse settings. To simplify notation, we describe the message propagation rule using a single GCN layer, as shown below:
\begin{equation}
 \mathbf{h}=GCN(\mathbf{x},\mathbf{A})={Relu}((\hat{\mathbf{A}}\mathbf{X})_\mathbf{x}\mathbf{W})={Relu}((\tilde{\mathbf{D}}^{\frac{1}{2}}\tilde{\mathbf{A}}\tilde{\mathbf{D}}^{-\frac{1}{2}}\mathbf{X})_\mathbf{x}\mathbf{W}).
 \label{eq:GCN}
\end{equation}
where $\mathbf{h} \in \mathbb{R}^d$ is the output vector of the GCN, $\mathbf{X} \in \mathbb{R}^{n \times k}$ is the feature matrix of the instances, $(\hat{\mathbf{A}}\mathbf{X})_\mathbf{x}$ represents the row of the matrix product corresponding to instance $\mathbf{x}$, $\tilde{\mathbf{A}}=\mathbf{A}+\mathbf{I}_N$, $\mathbf{I}_N$ is the identity matrix, $\tilde{\mathbf{D}}_{ii}=\sum_{j=1}^{N}\tilde{\mathbf{A}}_{ij}$, and $\mathbf{W} \in \mathbb{R}^{k \times d} $ denotes the parameters of the weight matrix. ${Relu}(\cdot)$ denotes the ReLU activation function. This leads to the following definition of the variational approximation of the posterior distribution for the latent factors:
\begin{equation}
    \begin{split}
        q_{\phi_t}(\mathbf{z}_t \mid \mathbf{x},\mathbf{A})=\mathcal{N}({\boldsymbol{\mu}}={\hat{\boldsymbol{\mu}}}_t,{\rm{diag}}(\boldsymbol{\sigma}^2)={\rm{diag}}({\hat{\boldsymbol{\sigma}}}_t^2)),\\
        q_{\phi_c}(\mathbf{z}_c \mid \mathbf{x},\mathbf{A})=\mathcal{N}({\boldsymbol{\mu}}={\hat{\boldsymbol{\mu}}}_c,{\rm{diag}}(\boldsymbol{\sigma}^2)={\rm{diag}}({\hat{\boldsymbol{\sigma}}}_c^2)),\\
        q_{\phi_y}(\mathbf{z}_y \mid \mathbf{x},\mathbf{A})=\mathcal{N}({\boldsymbol{\mu}}={\hat{\boldsymbol{\mu}}}_y,{\rm{diag}}(\boldsymbol{\sigma}^2)={\rm{diag}}({\hat{\boldsymbol{\sigma}}}_y^2)),\\
        q_{\phi_o}(\mathbf{z}_o \mid \mathbf{x},\mathbf{A})=\mathcal{N}({\boldsymbol{\mu}}={\hat{\boldsymbol{\mu}}}_o,{\rm{diag}}(\boldsymbol{\sigma}^2)={\rm{diag}}({\hat{\boldsymbol{\sigma}}}_o^2)),\\
    \end{split}
\end{equation}
where $\hat{\boldsymbol{\mu}}_t$, $\hat{\boldsymbol{\mu}}_c$, $\hat{\boldsymbol{\mu}}_y$, $\hat{\boldsymbol{\mu}}_o$ and ${\rm{diag}}(\hat{\boldsymbol{\sigma}}_t^2)$, ${\rm{diag}}(\hat{\boldsymbol{\sigma}}_c^2)$, ${\rm{diag}}(\hat{\boldsymbol{\sigma}}_y^2)$, ${\rm{diag}}(\hat{\boldsymbol{\sigma}}_o^2)$ are the means and covariance matrix of the Gaussian distributions, parameterized by the GCN as shown in Equation (\ref{eq:GCN}). Additionally, $\hat{\boldsymbol{\mu}}_t$ and ${\rm log}\,{\boldsymbol{\sigma}}_t^2$ are learned from two GCNs that share the training parameters of the first layer, and the same applies to the remaining three pairs.
\subsubsection{Predicting Potential Outcomes and Treatment Assignments}
The latent factors $\mathbf{z}_t$ and $\mathbf{z}_c$ are associated with the treatment $t$, whereas $\mathbf{z}_c$ and $\mathbf{z}_y$ are associated with the outcomes $y$, as illustrated in Fig. \ref{fig:causal graph}. To ensure that the treatment information is effectively captured by the union of $\mathbf{z}_t$ and $\mathbf{z}_c$, we add an auxiliary classifier to predict $t$ from the encoder's output, under the assumption that $\mathbf{z}_t$ and $\mathbf{z}_c$ can accurately predict $t$. Additionally, $y$ is predicted using two regression networks under different treatments to ensure that the outcome information is captured by the union of $\mathbf{z}_c$ and $\mathbf{z}_y$, based on the assumption that $\mathbf{z}_c$ and $\mathbf{z}_y$ can accurately predict $y$. Inspired by related approaches \cite{zhang2021treatment,liu2024edvae}, the classifier and regression networks are defined as follows: 
\begin{equation}
        q_{\eta_t}(t \mid \mathbf{z}_t, \mathbf{z}_c)= Bern(\sigma(h_1(\mathbf{z}_c,\mathbf{z}_t))),
\end{equation}
\begin{equation}
    \begin{split}
    q_{\eta_y}(y \mid t,\mathbf{z}_c, \mathbf{z}_y)&=\mathcal{N}(\mu=\hat{\mu},\sigma^2={\hat{\sigma}}^2),\\
        \hat{\mu}=th_2(\mathbf{z}_c,\mathbf{z}_y)+(1-t)h_3(\mathbf{z}_c,&\mathbf{z}_y),{\hat{\sigma}}^2=th_4(\mathbf{z}_c,\mathbf{z}_y)+(1-t)h_5(\mathbf{z}_c,\mathbf{z}_y),
    \end{split}
\end{equation}
where $h_1$ to $h_5$ are functions parameterized by fully connected neural networks, and the distribution settings are similar to those in Equations (\ref{eq:generative t}) and (\ref{eq:generative y}).
\subsubsection{Enforcing Independence of Latent factors}
Explicitly enhancing the independence of disentangled latent factors encourages the graph encoder to more effectively capture distinct and mutually independent information associated with each latent factor. In the following, we detail the regularization applied to enforce independence among the latent factors. 

The goal of our method is for the encoder to capture disentangled latent factors—namely, $\mathbf{z}_y$, $\mathbf{z}_c$, $\mathbf{z}_t$, and $\mathbf{z}_o$—that each contain exclusive information. This requires increasing the statistical independence between these latent factors to further strengthen disentanglement. Given the high dimensionality of the latent factors, using histogram-based measures is infeasible. Therefore, we use the Hilbert-Schmidt Independence Criterion (HSIC) \cite{gretton2005measuring} to promote sufficient independence among different latent factors. 

Specifically, let $\mathbf{z}_{t,*}$ represent the $d_{\mathbf{z}_t}$-dimensional random variable corresponding to the latent factor $\mathbf{z}_t$. Consider a measurable, positive definite kernel $\kappa_t$ defined over the domain of $\mathbf{z}_{t,*}$, with its associated Reproducing Kernel Hilbert Space (RKHS) denoted by $\mathcal{H}_t$. The mapping function $\psi_t(\cdot)$ transforms $\mathbf{z}_{t,*}$ into $\mathcal{H}_t$ according to the kernel $\kappa_t$. Similarly, for $\mathbf{z}_y$, $\mathbf{z}_c$, and $\mathbf{z}_o$, the same definitions apply. Given a pair of latent factors $\mathbf{z}_t$ and $\mathbf{z}_c$, where $\mathbf{z}_{t,*}$ and $\mathbf{z}_{c,*}$ are jointly sampled from the distribution $p(\mathbf{z}_{t,*}, \mathbf{z}_{c,*})$, the cross-covariance operator $\mathcal{C}_{\mathbf{z}_{t,*}, \mathbf{z}_{c,*}}$ in the RKHS of $\kappa_t$ and $\kappa_c$ is defined as:
\begin{equation}
    {\mathcal{C}}_{{\mathbf{z}}_{t,*}, \mathbf{z}_{c,*}}=\mathbb{E}_{p(\mathbf{z}_{t,*}, \mathbf{z}_{c,*})}
    \left[(\psi_t(\mathbf{z}_{t,*})-\boldsymbol{\mu}_{\mathbf{z}_{t,*}})^{\mathsf{T}}(\psi_t(\mathbf{z}_{c,*})-\boldsymbol{\mu}_{\mathbf{z}_{c,*}})\right],
\end{equation}
where $\boldsymbol{\mu}_{\mathbf{z}_{t,*}}=\mathbb{E}(\psi_t(\mathbf{z}_{t,*}))$, $\boldsymbol{\mu}_{\mathbf{z}_{c,*}}=\mathbb{E}(\psi_c(\mathbf{z}_{c,*}))$. Then, HSIC is defined as follows:
\begin{equation}
    {\rm{HSIC}}({\mathbf{z}}_{t,*},{\mathbf{z}}_{c,*}) :=  {\Vert {\mathcal{C}}_{{\mathbf{z}}_{t,*}, \mathbf{z}_{c,*}} \Vert}_{\rm HS}^2,
\end{equation}
where $\Vert \cdot \Vert$ is the Hilbert-Schmidt norm, which generalizes the Frobenius norm on matrices. It is known that for two random variables
$\mathbf{z}_{t,*}$ and $\mathbf{z}_{c,*}$ and characteristic
kernels $\kappa_{\mathbf{z}_{t,*}}$ and $\kappa_{\mathbf{z}_{c,*}}$, if $\mathbb{E}[\kappa_{\mathbf{z}_{t,*}}({\mathbf{z}_{t,*}},{\mathbf{z}_{t,*}})]<\infty, \mathbb{E}[\kappa_{\mathbf{z}_{c,*}}({\mathbf{z}_{c,*}},{\mathbf{z}_{c,*}})]<\infty$, then ${\rm HSIC}({\mathbf{z}}_{t,*},{\mathbf{z}}_{c,*})=0$ if and only if ${\mathbf{z}}_{t,*}\! \perp \!\!\! \perp {\mathbf{z}}_{c,*}$. In practice, we employ an unbiased estimator ${\rm HSIC}({\mathbf{z}}_{t,*},{\mathbf{z}}_{c,*})$  with $n$ samples \cite{song2012feature}, defined as:
\begin{equation}\label{eq:HSIC}
    {\rm{HSIC}}({\mathbf{z}}_{t,*},{\mathbf{z}}_{c,*})=\frac{1}{n(n-3)}\left[{\rm tr}(\tilde{\mathbf{U}}\tilde{\mathbf{V}}^{\mathsf{T}})+\frac{\mathbf{1}^{\mathsf{T}}\tilde{\mathbf{U}}\mathbf{1}\mathbf{1}^{\mathsf{T}}\tilde{\mathbf{V}}^{\mathsf{T}}\mathbf{1}}{(n-1)(n-2)}-\frac{2}{n-2}\mathbf{1}^{\mathsf{T}}\tilde{\mathbf{U}}\tilde{\mathbf{V}}^{\mathsf{T}}\mathbf{1}\right],
\end{equation}
where $\tilde{\mathbf{U}}$ and $\tilde{\mathbf{V}}$ denote the Grammer matrices with $\kappa_{\mathbf{z}_{t,*}}$ and $\kappa_{\mathbf{z}_{c,*}}$, respectively, with the diagonal elements set to zero. In our approach, we employ the radial basis function (RBF) kernel. The analysis for other pairs of latent factors follows similarly. 

The advantage of using Equation (\ref{eq:HSIC}) over other correlation measures to assess the dependence between different latent factors lies in its ability to capture complex, nonlinear dependencies by mapping the latent factors into the RKHS. The HSIC estimator we employ is unbiased, offering both effectiveness and computational efficiency. By bypassing the need for explicit estimation of the joint distribution of random variables, HSIC provides an efficient and reliable method for estimating dependencies across different representations.
\subsection{Loss Function of TNDVGA}
In this section, we design a loss function that combines all the key components of ITE estimation, thereby facilitating the end-to-end training of disentangled latent factor representations.
\subsubsection{Loss for Variational Inference} The encoder and decoder parameters can be learned by minimizing the negative evidence lower bound (ELBO), which serves as the primary loss function in standard VGAE. Here, $i$ denotes the $i$-th instance:
\begin{equation}
\begin{split}
    \mathcal{L}_{\rm ELBO}(\mathbf{x}_i, t_i, y_i)=\,&-\mathbb{E}_{q_{\phi_{t_i}}q_{\phi_{c_i}}q_{\phi_{y_i}}q_{\phi_{o_i}}}[{\rm log}\,p_{\theta_{\mathbf{x}_i}}({\mathbf{x}_i} \mid \mathbf{z}_{t,i}, \mathbf{z}_{c,i}, \mathbf{z}_{y,i}, \mathbf{z}_{o,i})+{\rm log}\,p_{\theta_{t_i}}(t_i \mid \mathbf{z}_{t,i}, \mathbf{z}_{c,i})\\
    &+\, {\rm log}\,p_{\theta_{y_i}}(y_i \mid t_i,\mathbf{z}_{c,i}, \mathbf{z}_{y,i})]-D_{KL}(q_{\phi_{t_i}}(\mathbf{z}_{t,i} \mid \mathbf{x}_i,\mathbf{A})\Vert p(\mathbf{z}_{t,i}))\\
    &-\,D_{KL}(q_{\phi_{c_i}}(\mathbf{z}_{c,i} \mid \mathbf{x}_i,\mathbf{A})\Vert p(\mathbf{z}_{c,i}))-D_{KL}(q_{\phi_{y_i}}(\mathbf{z}_{y,i} \mid \mathbf{x}_i,\mathbf{A})\Vert p(\mathbf{z}_{y,i}))\\
    &-\,D_{KL}(q_{\phi_{o_i}}(\mathbf{z}_{o,i} \mid \mathbf{x}_i,\mathbf{A})\Vert p(\mathbf{z}_{o,i})).
\end{split}
\end{equation}
Building on this loss, we further incorporate several useful regularization losses, as introduced below, into the variational lower bound to enhance the accuracy of ITE estimation.
\subsubsection{Loss for Potential Outcome Prediction and Treatment Assignment Prediction}
The factual loss function for predicting potential outcomes, along with the loss function for predicting treatment assignments, is defined as follows:
\begin{align}
    \mathcal{L}_{treat}(t_i, \mathbf{z}_{t,i}, \mathbf{z}_{c,i}) &= - \mathbb{E}_{q_{\phi_{t_i}}q_{\phi_{c_i}}}(q_{\eta_{t_i}}(t_i \mid \mathbf{z}_{t,i}, \mathbf{z}_{c,i})),\\
    \mathcal{L}_{pred}(t_i, y_i, \mathbf{z}_{c,i}, \mathbf{z}_{y,i}) &= -\mathbb{E}_{q_{\phi_{c_i}}q_{\phi_{y_i}}}(q_{\eta_{y_i}}(y_i \mid t_i,\mathbf{z}_{c,i}, \mathbf{z}_{y,i})).
\end{align}
\subsubsection{Loss for HSIC Independence Regularizer}
We apply pairwise independence constraints to the latent factors $z_t$, $z_c$, $z_y$, and $z_o$ in order to improve the statistical independence between disentangled representations. Thus, the HSIC regularizer $\mathcal{L}_{reg}$ is calculated as follows:
\begin{equation}
    \mathcal{L}_{indep}(\mathbf{z}_{t,*}, \mathbf{z}_{c,*}, \mathbf{z}_{y,*}, \mathbf{z}_{o,*})=\sum_{\stackrel{k,m\in \{t,c,y,o\}}{k\neq m}}{\rm HSIC}({\mathbf{z}}_{k,*}, {\mathbf{z}}_{m,*}).
\end{equation}
\subsubsection{Loss for Balanced Representation}
As shown in Fig. \ref{fig:causal graph}, we observe that $\mathbf{z}_y  \! \perp \!\!\! \perp t$, implying that $p(\mathbf{z}_y \mid t=0) = p(\mathbf{z}_y \mid t=1)$. Therefore, following the approach in \cite{hassanpour2019learning}, we aim for the learned $\mathbf{z}_y$ to exclude any confounding information, ensuring that all confounding factors are captured within $\mathbf{z}_c$. This is crucial for ensuring the accuracy of the treatment effect estimation. To quantify the discrepancy between the distributions of $\mathbf{z}_y$ for the treatment and control groups, we use the integral probability metric (IPM) \cite{sriperumbudur2012empirical,guo2020learning}. 
% muller1997integral,
We define the balanced representation loss as $\mathcal{L}_{disc}$,
\begin{equation}
\label{eq:disc}
    \mathcal{L}_{disc}(\mathbf{z}_{y,*})=IPM(\{\mathbf{z}_{y,i}\}_{i:t_i=0}, \{\mathbf{z}_{y,i}\}_{i:t_i=1}).
\end{equation}
We utilize the Wasserstein-1 distance, defined as \citet{sriperumbudur2012empirical}, to calculate Equation (\ref{eq:disc}). Additionally, we employ the effective approximation algorithm proposed by \cite{cuturi2014fast} to calculate the Wasserstein-1 distance and associated gradients about the model parameters for training the TNDVGA.
\subsubsection{The Overall Objective Function} 
The following provides a summary of the overall objective function for TNDVGA:
\begin{equation}
    \begin{split}
        % \mathcal{L}_{\rm{TNDVGA}}= \sum_{i=1}^n\mathcal{L}_{\rm ELBO}(\mathbf{x}_i, t_i, y_i) +\alpha_t  \mathcal{L}_{treat}(t_i, \mathbf{z}_{t,i}, \mathbf{z}_{c,i})+\alpha_y  \mathcal{L}_{pred}(t_i, y_i, \mathbf{z}_{c,i}, \mathbf{z}_{y,i}) +\alpha_1   \mathcal{L}_{disc}(\mathbf{z}_{y,*})+\alpha_2 \mathcal{L}_{reg}(\mathbf{z}_{t,*}, \mathbf{z}_{c,*}, \mathbf{z}_{y,*}, \mathbf{z}_{o,*})原始，参数调整图alpha1对应disc
    \mathcal{L}_{\rm{TNDVGA}}= \frac{1}{n}&\sum_{i=1}^n\left[\mathcal{L}_{\rm ELBO}(\mathbf{x}_i, t_i, y_i) +\alpha_t  \mathcal{L}_{treat}(t_i, \mathbf{z}_{t,i}, \mathbf{z}_{c,i})+\alpha_y  \mathcal{L}_{pred}(t_i, y_i, \mathbf{z}_{c,i}, \mathbf{z}_{y,i})\right] \\
    &+\alpha_1 \mathcal{L}_{indep}(\mathbf{z}_{t,*}, \mathbf{z}_{c,*}, \mathbf{z}_{y,*}, \mathbf{z}_{o,*})+\alpha_2   \mathcal{L}_{disc}(\mathbf{z}_{y,*})+ \lambda{\Vert \Theta \Vert}_2^2,
    \end{split}
\end{equation}
where $\alpha_t$, $\alpha_y$, $\alpha_1$, and $\alpha_2$ are non-negative hyperparameters that balance the corresponding terms. The final term, $\lambda{\Vert \Theta \Vert}_2^2$, is applied to all model parameters $\Theta$ to avoid overfitting. Despite the complexity of our loss function, the model runs efficiently and achieves strong performance by jointly optimizing all terms in a unified framework, as demonstrated in the experiments in the next section.

After completing the model training, we can predict the ITEs of new instances based on the observed covariates $\mathbf{x}$. We utilize the encoders $q_{\phi_c}(\mathbf{z}_c \mid \mathbf{x},\mathbf{A})$ and $q_{\phi_y}(\mathbf{z}_y \mid \mathbf{x},\mathbf{A})$ to sample the posteriors of confounding factors and risk factors $l$ times, and then use the decoder $p_{\theta_y}(y \mid t,\mathbf{z}_c, \mathbf{z}_y)$ to compute the predicted outcomes $y$ at different $t$, averaging them to obtain the estimated potential outcomes $y^1$ and $y^0$. The calculation of ATE can be done by performing the above steps on all test samples and then taking the average.
\section{Experiments}\label{sec:Experiments}
In this section, we perform a series of experiments to illustrate the effectiveness of the proposed TNDVGA framework. We first introduce the datasets, evaluation metrics, baselines, and model parameter configurations utilized in the experiments. Then, we compare the performance of different models in estimating ITE. After that, we conduct an ablation study to evaluate the importance of key components in the TNDVGA and conduct a hyperparameter study.
\subsection{Datasets}
The absence of ground-truth ITE is a widely recognized challenge, making the use of completely real datasets unfeasible. One common approach to address this is to generate synthetic data that simulates all potential outcomes for different treatments. To this end, we first construct a fully synthetic dataset. For more realistic validation, we then use two real-world networks with semi-synthetic data from \cite{guo2020learning} to further assess the model’s performance.
\subsubsection{Synthetic datasets}
Inspired by \cite{hassanpour2019learning}, we generate synthetic datasets named \textit{TNDVGASynth}, which follow the structure illustrated in Fig. \ref{fig:causal graph} and the relationships defined in Equations (\ref{eq:generate x})-(\ref{eq:generate y}).
\begin{equation}
    \begin{split}
        \mathbf{z}_t  \sim \mathcal{N}(\mathbf{0},\mathbf{1})^{m_t},\,\,\mathbf{z}_c \sim&\,\, \mathcal{N}(\mathbf{0},\mathbf{1})^{m_c},\,\,\mathbf{z}_y \sim \mathcal{N}(\mathbf{0},\mathbf{1})^{m_y},\,\,\mathbf{z}_o \sim \mathcal{N}(\mathbf{0},\mathbf{1})^{m_o}\,\,\\
        \mathbf{x}=&\,\,Concat(\mathbf{z}_t, \mathbf{z}_c, \mathbf{z}_y, \mathbf{z}_o),\\
        {\boldsymbol{\Psi}} =&\,\,Concat(\mathbf{z}_t, \mathbf{z}_c), \\
        {\boldsymbol{\Phi}} =&\,\,Concat(\mathbf{z}_c, \mathbf{z}_y),\\
    \end{split}
    \label{eq:generate x}
\end{equation}
\\
\begin{equation}
    \begin{split}
    &a\sim Bernoulli(\frac{0.01}{1+{\rm exp}(- r)})\\
    {\rm with}\,\,&r = \mathbf{h}\cdot\mathbf{h}+1,\,\, \mathbf{h}=Concatenate(\mathbf{z}_t, \mathbf{z}_c, \mathbf{z}_y, \mathbf{z}_o),
    \end{split}
    \label{eq:generate A}
\end{equation}
\\
\begin{equation}
    \begin{split}
        t&\sim Bernoulli(\frac{1}{1+{\rm exp}(-\zeta \mathbf{h})})\\
        {\rm with}\,&\, h={\boldsymbol{\Psi}}\cdot \boldsymbol{\theta}+1,\,\,\boldsymbol{\theta}\sim\mathcal{N}(\mathbf{0},\mathbf{1})^{m_t+m_c},
    \end{split}
    \label{eq:generate t}
\end{equation}
\\
\begin{equation}
    \begin{split}
        y^0&=\frac{{(\boldsymbol{\Phi}}\circ{\boldsymbol{\Phi}}\circ{\boldsymbol{\Phi}}+0.5)\cdot{\boldsymbol{\nu}}^0}{m_c+m_y}+\epsilon\\
        &y^1=\frac{{(\boldsymbol{\Phi}}\circ{\boldsymbol{\Phi}})\cdot{\boldsymbol{\nu}}^1}{m_c+m_y}+\epsilon\\
        {\rm with} \,\,&{\boldsymbol{\nu}}^0,{\boldsymbol{\nu}}^1\sim\mathcal{N}(\mathbf{0},\mathbf{1})^{m_c+m_y},\,\,\epsilon\sim\mathcal{N}(\mathbf{0},\mathbf{1}),
    \end{split}
    \label{eq:generate y}
\end{equation}
where $concat(\cdot, \cdot)$ represents the vector concatenation operation. $a$ is an element of the matrix $\mathbf{A}$; $m_t, m_c, m_y, m_o$ represent the dimensions of the latent factors $\mathbf{z}_t, \mathbf{z}_c, \mathbf{z}_y, \mathbf{z}_o$, respectively. A scalar 
$\zeta$ determines the slope of the logistic curve, and as 
$\zeta$ increases, the selection bias also increases. The symbol $\cdot$ denotes the dot product, and $\circ$ signifies the element-wise (Hadamard) product. We considered all feasible datasets generated from the grid defined by $m_t, m_c, m_y, m_o \in \{4, 8\}$, creating 16 scenarios. For each scenario, we synthesize five datasets using different initial random seeds.
%写一下A用0.01的原因，不写了 
\subsubsection{Semi-synthetic datasets}
% To validate the algorithm's efficacy with a more realistic dataset, we use two real-world networks with semi-synthetic data from \cite{tang2011leveraging} to further assess the model's performance. 
% Both datasets are generated from real-world network data.
\begin{table}[!b]
\belowrulesep=0pt
\aboverulesep=0pt
  \caption{Statistics of the Two Semi-Synthetic Datasets: BlogCatalog
and Flickr}
  \begin{tabular}{|c|c|c|c|c|c|}
    \toprule
    Datasets&Instances&Edges&Features&$\kappa_2$&ATE mean $\pm$ STD\\
    \midrule
    \multirow{3}{*}{BlogCatalog} & \multirow{3}{*}{5196} &\multirow{3}{*}{173,468}& \multirow{3}{*}{8,189}&0.5&4.366$\pm$0.553\\
    & & & &1&7.446$\pm$0.759\\
    & & & &2&13.534$\pm$2.309\\\hline
    \multirow{3}{*}{Flickr} & \multirow{3}{*}{7,575} & \multirow{3}{*}{239,738}&\multirow{3}{*}{12,047}&0.5&6.672$\pm$3.068\\
    & & & &1&8.487$\pm$3.372\\
    & & & &2&20.546$\pm$5.718\\
    \bottomrule
\end{tabular}
\label{tab:datasets}
\end{table}
\paragraph{BlogCatalog} In the BlogCatalog dataset \cite{tang2011leveraging}, a social blog directory for managing bloggers and their blogs, each individual represents a blogger, and each edge represents a social connection between two bloggers. The features are represented as a bag-of-words representation of the keywords in the bloggers' descriptions. To generate synthetic outcomes and treatments, we rely on the assumptions outlined in \cite{guo2020learning,veitch2019using}.
The outcome $y$ refers to the readers' opinion of each blogger, and the treatment $t$ represents whether the blogger's content receives more views on mobile devices or desktops. Bloggers whose content is primarily viewed on mobile devices are placed in the treatment group, while those whose content is mainly viewed on desktops are placed in the control group. Additionally, following the assumptions in \cite{guo2020learning}, we assume that the topics discussed by the blogger and their neighbors causally affect both the blogger’s treatment assignment and outcome. In this task, our goal is to investigate the ITE of receiving more views on mobile devices (instead of desktops) on the readers' opinion. 
% Specifically, a Latent Dirichlet Allocation (LDA) topic model is trained To synthesize treatments and outcomes\cite{blei2003latent}.
We follow the synthetic procedure for \( t \) and \( y \) described in \cite{tang2011leveraging}. In this dataset, the hyperparameters \( \kappa_1 \geq 0 \) and \( \kappa_2 \geq 0 \) control the strength of the selection bias introduced by the blogger's topics and the topics of their neighbors, respectively. A higher value of \( \kappa_2 \) amplifies the influence of neighbors' topics on the device. For our study, we set \( \kappa_1 = 10 \) and \( \kappa_2 \in \{0.5, 1, 2\} \), thereby generating three distinct datasets.
\paragraph{Flickr} Flickr \cite{tang2011leveraging} is an online platform utilized for the purpose of sharing images and videos. In this dataset, each user is represented as an instance, with edges indicating social connections between users. The features of each user are a list of interest tags. 
% We also investigate the causal effect of viewing images and videos on mobile versus desktops (treatment) on readers' opinions of the user (outcome). 
The treatment and outcome are synthesized using the same settings and simulation process as in the BlogCatalog.

In Table \ref{tab:datasets}, we provide a detailed statistical summary of the two semi-synthetic datasets. For each parameter setting, the mean and standard deviation of the ATEs are computed across 10 runs.
\subsection{Evaluation Metrics}
We evaluate the performance of the proposed TNDVGA framework in learning ITE using two widely used metrics in causal inference. We report the Root Precision in Estimating Heterogeneous Effects ($\sqrt{\epsilon_{PEHE}}$) to measure the accuracy of individual-level treatment effect, and the Mean Absolute Error of ATE ($\epsilon_{ATE}$) to assess the accuracy of population-level treatment effect. They are formally defined as follows:
\begin{align}
    &\sqrt{\epsilon_{PEHE}}=\sqrt{\frac{1}{n}\sum_{i=1}^n({\hat{\tau}}_i-\tau_i)^2},\\
    &\epsilon_{ATE}=\frac{1}{n}\lvert\sum_{i=1}^n{\hat{\tau}}_i-\sum_{i=1}^n\tau_i\rvert,
\end{align}
where ${\hat{\tau}}_i={\hat{y}}_i^1-{\hat{y}}_i^0$ and $\tau_i=y_i^1-y_i^0$ represent the estimated ITE and the ground-truth ITE of instance $i$, respectively. Lower values of these metrics indicate better estimating performance.
\subsection{Baselines}
We compare our model against the following state-of-the-art models used for ITE estimation:
\begin{itemize}
\item \textbf{Bayesian Additive Regression Trees (BART)} \cite{chipman2010bart}. BART is a widely used nonparametric Bayesian regression model that utilizes dimensionally adaptive random basis functions.
\item \textbf{Causal Forest} \cite{wager2018estimation}. Causal Forest is a non-parametric causal inference method designed to estimate heterogeneous treatment effects, extending Breiman's well-known random forest algorithm.
\item \textbf{Counterfactual Regression (CFR)} \cite{shalit2017estimating}. CFR is a representation learning-based approach for predicting individual treatment effects from observational data. It reduces imbalance between treatment and control groups' latent representations and minimizes prediction errors by projecting original features into a latent space to capture confounders. CFR uses Integral Probability Metrics to measure distribution distances. This study employs two balancing penalties: Wasserstein-1 distance (CFR-Wass) and maximum mean discrepancy (CFR-MMD).
\item \textbf{Treatment-agnostic Representation Networks (TARNet)} \cite{shalit2017estimating}. TARNet is a variant of CFR that excludes the balance regularization term from its model.
\item \textbf{Causal Effect Variational Autoencoder
(CEVAE)} \cite{louizos2017causal}. CEVAE is built upon Variational Autoencoders (VAE) \cite{kingma2013auto} and adheres to the causal inference framework with proxy variables. It is capable of jointly estimating the unknown latent space that captures confounders and the causal effect.
\item \textbf{Identifiable treatment-conditional VAE (Intact-VAE)} \cite{wu2021intact}. Intact-VAE is a novel VAE architecture designed to estimate causal effects in the presence of unobserved confounding.
\item \textbf{Treatment Effect by Disentangled Variational AutoEncoder (TEDVAE)} \cite{zhang2021treatment}. TEDVAE is a variational inference approach that simultaneously infers latent factors from observed variables, while disentangling these factors into three distinct sets: instrumental factors, confounding factors, and risk factors. These disentangled factors are then utilized for estimating treatment effects.
\item \textbf{Network Deconfounder (NetDeconf)} \cite{guo2020learning}. NetDeconf is a novel causal inference framework that leverages network information to identify patterns of latent confounders, enabling the learning of valid individual causal effects from networked observational data.
\item \textbf{Graph Infomax Adversarial Learning (GIAL)} \cite{chu2021graph}.
GIAL is a model for estimating treatment effects that uses network structure to capture additional information by identifying imbalances within the network. In this work, we use two variants of GIAL: one with the original GCN implementation \cite{kipf2016semi} (GIAL-GCN) and another with GAT \cite{velivckovic2017graph} (GIAL-GAT).
\end{itemize}
\subsection{Parameter Settings}
We implement TNDVGA using PyTorch on an NVIDIA RTX 4090D GPU. For BlogCatalog and Flickr, we run 10 experiments and report the average results. The dataset is split into training (60\%), validation (20\%), and test (20\%) sets for each run. Baseline methods, such as BART, Causal Forest, CFR, TARNet, and CEVAE, are originally designed for non-networked observational data and thus cannot leverage network information directly. We concatenate the adjacency matrix rows with the original features to ensure a fair comparison; however, this does not notably enhance baseline performance due to dimensionality limitations. For the baselines, we used the default hyperparameters as in previous works \cite{guo2020learning, chu2021graph}. For TNDVGA, we apply grid search to identify the optimal hyperparameter settings. Specifically, the learning rate is set to $3 \times 10^{-4}$, $\alpha_t$ and $\alpha_y$ set to 100, and $\lambda$ to $5 \times 10^{-5}$. The number of GCN layers is varied between 1, 2, and 3, with the hidden dimensions set to 500, and the dimensions of $\mathbf{z}_t$, $\mathbf{z}_c$, $\mathbf{z}_y$, and $\mathbf{z}_o$ vary across \{10, 20, 30, 40, 50\}. The regularization coefficients $\alpha_1$ and $\alpha_2$ are tuned within the range \{$10^{-2}$, $10^{-1}$, 1, $10$, $100$\}. For the BlogCatalog dataset, TNDVGA is trained for 500 epochs, while for Flickr, training lasts for 1000 epochs. And we use Adam optimizer \cite{kingma2014adam} to train TNDVGA. For the synthetic datasets, we use the same parameter selection approach as for the semi-synthetic datasets. Unless stated otherwise, the latent variable dimensions for the different factors are set to their true values. The size of each synthetic dataset is identical, with 5000 instances for training, 1000 for validation, and 1000 for testing. 
% Our implementation can be found at \url{https://github.com/luckyfd/TNDVGA} \footnote{The code is currently anonymous and will be de-anonymized upon acceptance.}.
\subsection{Perfomance Comparision}
We compare the proposed framework TNDVGA with the state-of-the-art baselines for ITE estimation on both semi-synthetic datasets and synthetic datasets. For the semi-synthetic dataset, since the experimental settings are the same as those in \cite{chu2021graph}, we use the baseline results from that paper, except for Intact-VAE and TEDVAE.
\subsubsection{Performance on Synthetic datasets}
\begin{figure}[b]
  \centering
  \includegraphics[width=0.4\linewidth]{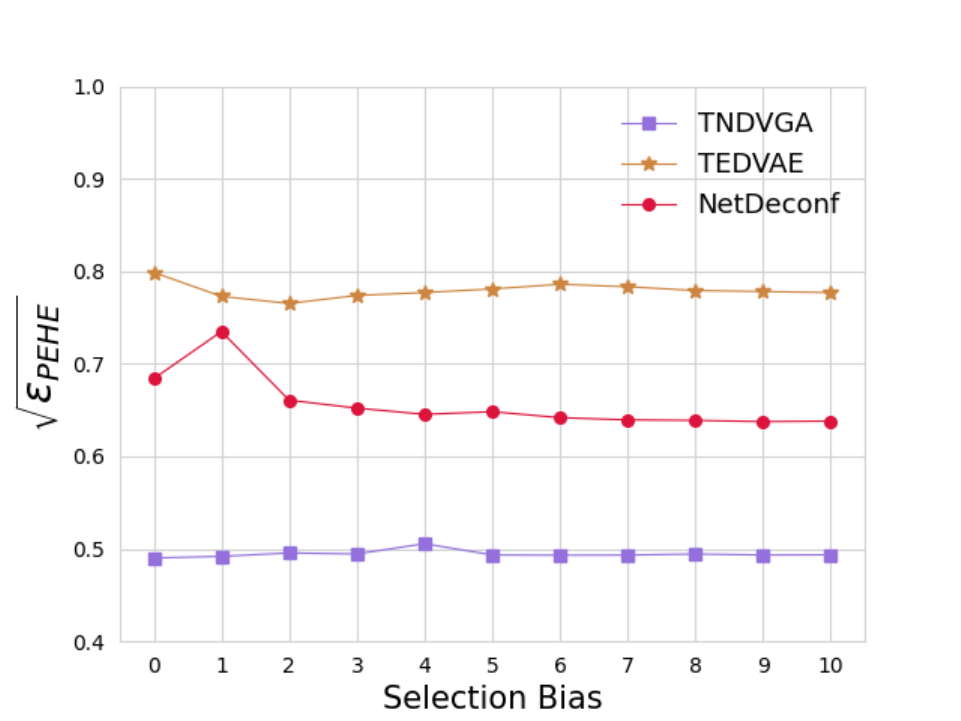}
  \caption{Experimental results of different methods in ITE estimation under different levels of selection bias. As the selection bias increases, TNDVGA consistently performs the best.
}
  \Description{syn_compare}
  \label{fig:selection bias}
\end{figure}
\begin{figure}[t]
	\centering
	\begin{minipage}{0.22\linewidth}
		\centering
		\includegraphics[width=1.0\linewidth]{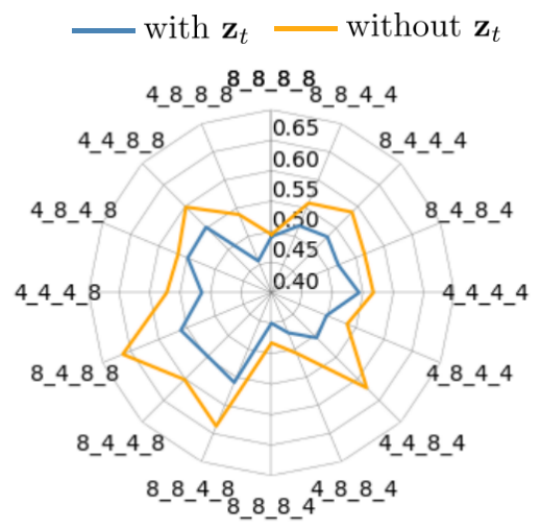}
		% \caption{随即加入高斯噪声}
		\label{fig:syn_zt}%文中引用该图片代号
	\end{minipage}
	%\qquad
	\begin{minipage}{0.22\linewidth}
		\centering
		\includegraphics[width=1.0\linewidth]{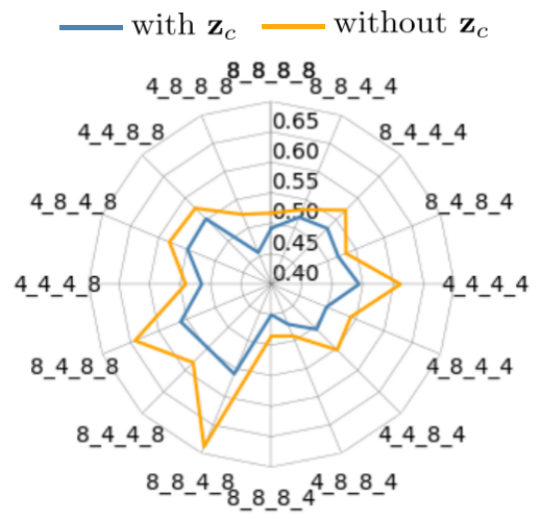}
		% \caption{旋转图像}
		\label{fig:syn_zc}%文中引用该图片代号
	\end{minipage}
    \begin{minipage}{0.22\linewidth}
		\centering
		\includegraphics[width=1.0\linewidth]{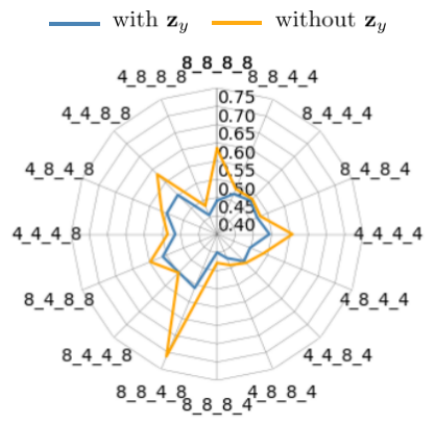}
		% \caption{旋转图像}
		\label{fig:syn_zy}%文中引用该图片代号
	\end{minipage}
    \begin{minipage}{0.22\linewidth}
		\centering
		\includegraphics[width=1.0\linewidth]{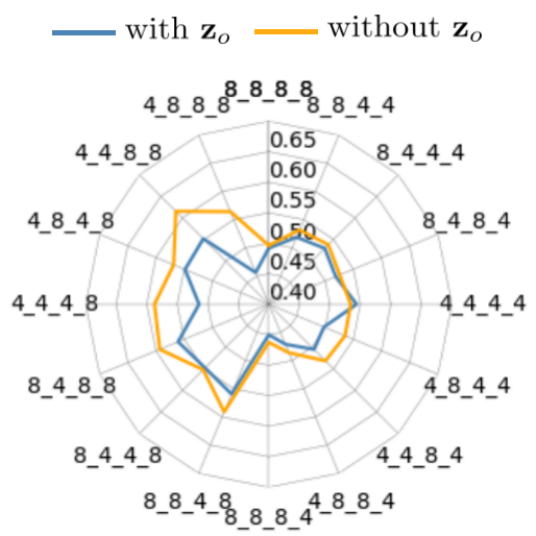}
		% \caption{旋转图像}
		\label{fig:syn_zo}%文中引用该图片代号
	\end{minipage}
 \caption{In the radar chart, each vertex of the polygon is labeled with a sequence of latent factor dimensions from the synthetic dataset. For example, \( 8\text{-}8\text{-}8\text{-}8 \) indicates that the dataset is generated using 8 dimensions each for latent instrumental factors, latent confounding factors, latent adjustment factors, and latent noise factors. Each polygon represents the PEHE metric of the model (smaller polygons indicate better performance).
}
 \label{fig:radar charts}
\end{figure}
First, similar to \cite{bao2022learning}, we control the magnitude of selection bias in the dataset by setting the size of the scalar $\zeta$. We compare TNDVGA with TEDVAE and NetConf when the dimensions of the latent factors are (8, 8, 8, 8). As shown in Fig. \ref{fig:selection bias}, we observe that as the value of $\zeta$ increases, indicating a rise in selection bias, TNDVGA consistently performs the best. Furthermore, TNDVGA's performance remains stable and is largely unaffected by variations in selection bias. This demonstrates that TNDVGA exhibits better robustness against selection bias, which is crucial when handling real-world datasets. We have similar results on other synthetic datasets.

Next, we investigate the TNDVGA's ability to recover the latent components $\mathbf{z}_t$, $\mathbf{z}_c$, $\mathbf{z}_y$, and $\mathbf{z}_o$ that are utilized to construct the observed covariates $\mathbf{x}$ and examine the contribution of disentangling these latent factors to the estimation of ITE. To this end, similar to the settings in \cite{hassanpour2019learning,bao2022learning}, we compare the performance of TNDVGA when the parameters $d_{\mathbf{z}_t}$, $d_{\mathbf{z}_c}$, $d_{\mathbf{z}_y}$ and $d_{\mathbf{z}_o}$ are set to correspond with the true number of latent factors against the performance when one of the latent dimensionality parameters is set to zero. For example, setting $d_{\mathbf{z}_c} = 0$ forces TNDVGA to ignore the disentanglement of confounding factors. If TNDVGA performs better when considering the disentanglement of all latent factors compared to when any one latent factor is ignored, then it can be concluded that TNDVGA can recover latent factors, and that disentangling latent factors is beneficial for ITE estimation. Fig. \ref{fig:radar charts} displays the radar charts corresponding to each factor. We can clearly see that when TNDVGA considers disentangling the latent factors through non-zero dimensionality parameters, its performance outperforms that when any latent dimension is set to zero. 
\subsubsection{Performance on Semi-Synthetic datasets} 
Tables \ref{tab:BlogCatalog} and \ref{tab:Flickr} present the experimental results on the BlogCatalog and Flick datasets, respectively. Through a comprehensive analysis of the experimental results, we have the following observations:
\begin{itemize}
    \item The proposed variational inference framework for ITE estimation, TNDVGA, consistently outperforms state-of-the-art traditional baseline methods, including BART, Causal Forest, CFR, CEVAE and Intact-VAE, across different settings on both datasets, as these methods do not account for disentangled latent factors or leverage network information for ITE learning. 
    \item TNDVGA and NetDeconf, along with GIAL, outperform other baseline methods in ITE estimation due to their ability to leverage auxiliary network information to capture the impact of latent factors on ITE estimation. This result suggests that network information helps in learning representations of latent factors, leading to more accurate ITE estimation. Furthermore, TNDVGA also outperforms NetDeconf and GIAL in ITE estimation because it learns representations of four different latent factors, whereas NetDeconf and GIAL only learn representations of latent confounding factors.
    \item TEDVAE also performs reasonably well in estimating ITE, mainly because its model infers and disentangles three disjoint sets of instrumental, confounding, and risk factors from the observed variables. This also highlights the importance of learning disentangled latent factors for ITE estimation. However, TNDVGA outperforms TEDVAE, as it additionally accounts for latent noise factors and effectively leverages network information, whereas TEDVAE struggles to fully utilize network information to enhance its modeling capabilities.
    \item TNDVGA demonstrates strong robustness in selecting the latent dimensionality parameter. Considering that we did not explicitly model the generation process of latent factors in these two semi-synthetic real datasets, 
    % and that the dataset generation did not include instrumental, adjustment, or noise factors,
    TNDVGA still exhibits optimal performance under these conditions. These results indicate that, even in more realistic datasets 
    % without instrumental, risk, or noise factors
    , TNDVGA can effectively learn latent factors and estimate ITE.
    \item When the influence of latent confounders increases (i.e., with a growing $\kappa_2$ value), TNDVGA suffers the least in $\sqrt{\epsilon_{PEHE}}$ and $\epsilon_{ATE}$. This is because TNDVGA has the ability to identify patterns of latent confounding factors from the network structure, enabling it to infer ITE more accurately.
    
\end{itemize}
\begin{table}[!t]
\belowrulesep=0pt
\aboverulesep=0pt
  \caption{Performance comparison for different methods on BlogCatalog. We report the average values of \( \sqrt{\epsilon_{PEHE}} \) and $\epsilon_{ATE}$ on the test sets. 
}
  \label{tab:BlogCatalog}
  \begin{tabular}{|l|c|c|c|c|c|c|}
    \toprule
    \multicolumn{7}{|c|}{BlogCatalog}\\\hline
    $\kappa_2$&\multicolumn{2}{c|}{0.5}&\multicolumn{2}{c|}{1}&\multicolumn{2}{c|}{2}\\\hline
    &$\sqrt{\epsilon_{PEHE}}$&$\epsilon_{ATE}$&$\sqrt{\epsilon_{PEHE}}$&$\epsilon_{ATE}$&$\sqrt{\epsilon_{PEHE}}$&$\epsilon_{ATE}$\\
    \midrule
    BART  & 4.808 & 2.680 & 5.770 & 2.278 & 11.608 & 6.418  \\ \hline
        Causal Forest & 7.456 & 1.261 & 7.805 & 1.763 & 19.271 & 4.050   \\ \hline
        CFR-Wass & 10.904 & 4.257 & 11.644 & 5.107 & 34.848 & 13.053 \\ \hline
        CFR-MMD & 11.536 & 4.127 & 12.332 & 5.345 & 34.654 & 13.785\\ \hline
        TARNet  & 11.570 & 4.228 & 13.561 & 8.170 & 34.420 & 13.122 \\ \hline
        CEVAE & 7.481 & 1.279 & 10.387 & 1.998 & 24.215 & 5.566  \\ \hline
        Intact-VAE & 6.260 & 1.925 & 6.840 & 0.845 & 13.618 & 3.329 \\ \hline
        TEDVAE & 4.609 & 0.798 & 4.354 & 0.881 & 6.805 & 1.190 \\ \hline
        NetDeconf & 4.532 & 0.979 & 4.597 & 0.984 & 9.532 & 2.130\\ \hline
        GIAL-GCN & 4.023 & 0.841 & 4.091 & 0.883 & 8.927 & 1.780  \\\hline
        GIAL-GAT &4.215&0.912& 4.258 &0.937& 9.119& 1.982\\ \hline
        TNDVGA (ours) &\textbf{3.969}&\textbf{0.719}&\textbf{3.846}&\textbf{0.699}&\textbf{6.066}&\textbf{1.057}\\
    \bottomrule
\end{tabular}
\end{table}
\begin{table}[!t]
\belowrulesep=0pt
\aboverulesep=0pt
  \caption{Performance comparison for different methods on Flickr. We report the average values of \( \sqrt{\epsilon_{PEHE}} \) and $\epsilon_{ATE}$ on the test sets. }
  \label{tab:Flickr}
  \begin{tabular}{|l|c|c|c|c|c|c|}
    \toprule
    \multicolumn{7}{|c|}{Flickr}\\\hline
    $\kappa_2$&\multicolumn{2}{c|}{0.5}&\multicolumn{2}{c|}{1}&\multicolumn{2}{c|}{2}\\\hline
    &$\sqrt{\epsilon_{PEHE}}$&$\epsilon_{ATE}$&$\sqrt{\epsilon_{PEHE}}$&$\epsilon_{ATE}$&$\sqrt{\epsilon_{PEHE}}$&$\epsilon_{ATE}$\\
    \midrule
    BART  & 4.907 & 2.323 & 9.517 & 6.548 & 13.155 & 9.643  \\ \hline
        Causal Forest  & 8.104 & 1.359 & 14.636 & 3.545 & 26.702 & 4.324  \\ \hline
        CFR-Wass  & 13.846 & 3.507 & 27.514 & 5.192 & 53.454 & 13.269  \\ \hline
        CFR-MMD  & 13.539 & 3.350 & 27.679 & 5.416 & 53.863 & 12.115 \\ \hline
        TARNet  & 14.329 & 3.389 & 28.466 & 5.978 & 55.066 & 13.105 \\ \hline
        CEVAE  & 12.099 & 1.732 & 22.496 & 4.415 & 42.985 & 5.393 \\ \hline
        Intact-VAE & 6.483 & 0.635 & 10.721 & 1.085 & 16.050 & 2.022 \\\hline
        TEDVAE & 5.072 & 1.041 & 7.125 & 1.328 & 12.952 & 2.124 \\\hline
        NetDeconf  & 4.286 & 0.805 & 5.789 & 1.359 & 9.817 & 2.700 \\ \hline
        GIAL-GCN & 3.938 & 0.682 & 5.317 & 1.194 & 9.275 & 2.245 \\\hline
        GIAL-GAT& 4.015& 0.773& 5.432& 1.231& 9.428 &2.586\\ \hline
        TNDVGA (ours) &\textbf{3.896}&\textbf{0.633}&\textbf{4.974}&\textbf{1.037}&\textbf{7.302}&\textbf{1.908}\\
    \bottomrule
\end{tabular}
\end{table}
\begin{table}[!b]
\belowrulesep=0pt
\aboverulesep=0pt
  \caption{Ablation study of our method's variants on BlogCatalog.}
  \label{tab:BlogCatalog_ablation}
  \begin{tabular}{|c|c|c|c|c|c|c|}
    \toprule
    \multicolumn{7}{|c|}{BlogCatalog}\\\hline
    $\kappa_2$&\multicolumn{2}{c|}{0.5}&\multicolumn{2}{c|}{1}&\multicolumn{2}{c|}{2}\\\hline
    &$\sqrt{\epsilon_{PEHE}}$&$\epsilon_{ATE}$&$\sqrt{\epsilon_{PEHE}}$&$\epsilon_{ATE}$&$\sqrt{\epsilon_{PEHE}}$&$\epsilon_{ATE}$\\
    \midrule
        TNDVGA &3.937&0.656&3.918&0.677&0.651&1.184\\\hline
        TNDVGA(w/o BP)& 4.090&0.710&4.060&0.808&6.887&1.798\\\hline
        TNDVGA(w/o HSIC)&4.114&0.765&4.070&0.808&6.982&1.958\\
    \bottomrule
\end{tabular}
\label{tab:ablation study1}
\end{table}
\begin{table}[htbp]
\belowrulesep=0pt
\aboverulesep=0pt
  \caption{Ablation study of our method's variants on Flickr.}
  \label{tab:Flickr_ablation}
  \begin{tabular}{|c|c|c|c|c|c|c|}
    \toprule
        \multicolumn{7}{|c|}{Flickr}\\\hline
    $\kappa_2$&\multicolumn{2}{c|}{0.5}&\multicolumn{2}{c|}{1}&\multicolumn{2}{c|}{2}\\\hline
    &$\sqrt{\epsilon_{PEHE}}$&$\epsilon_{ATE}$&$\sqrt{\epsilon_{PEHE}}$&$\epsilon_{ATE}$&$\sqrt{\epsilon_{PEHE}}$&$\epsilon_{ATE}$\\
    \midrule
    TNDVGA &3.897&0.610&5.045&0.956&8.763&1.074\\\hline
        TNDVGA(w/o BP) &4.298&0.637&5.551&1.359&10.8531&1.678\\\hline
        TNDVGA(w/o HSIC)&4.622&0.930&5.908&1.380&11.198&1.948\\
    \bottomrule
\end{tabular}
\label{tab:ablation study2}
\end{table}
\subsection{Ablation Study}
Furthermore, we investigate the effect of key components in the proposed TNDVGA framework on learning ITE from network observational data. In particular, we conduct an ablation study by developing two variants of TNDVGA and comparing their performance on the BlogCatalog and Flickr datasets with the original TNDVGA: (i) TNDVGA w/o Balanced Representations: This variant does not balance the learned representations, meaning it does not include the balanced representation loss $\mathcal{L}_{disc}$ during training. As a result, the learning factor $\mathbf{z}_y$ may embed information about $\mathbf{z}_t$. We refer to this variant as TNDVGA w/o BP.
 (ii) TNDVGA w/o HSIC Independence Regularizer: This variant omits the independence constraint mechanism between different factor representations, which may prevent the learned representations from being disentangled. We refer to this variant as TNDVGA w/o HSIC.
\begin{figure}[!t]
\centering
\subfigure[$\sqrt{\epsilon_{PEHE}}(\kappa_2=0.5)$]{
\includegraphics[width=3.5cm,trim=70 30 85 35,clip]{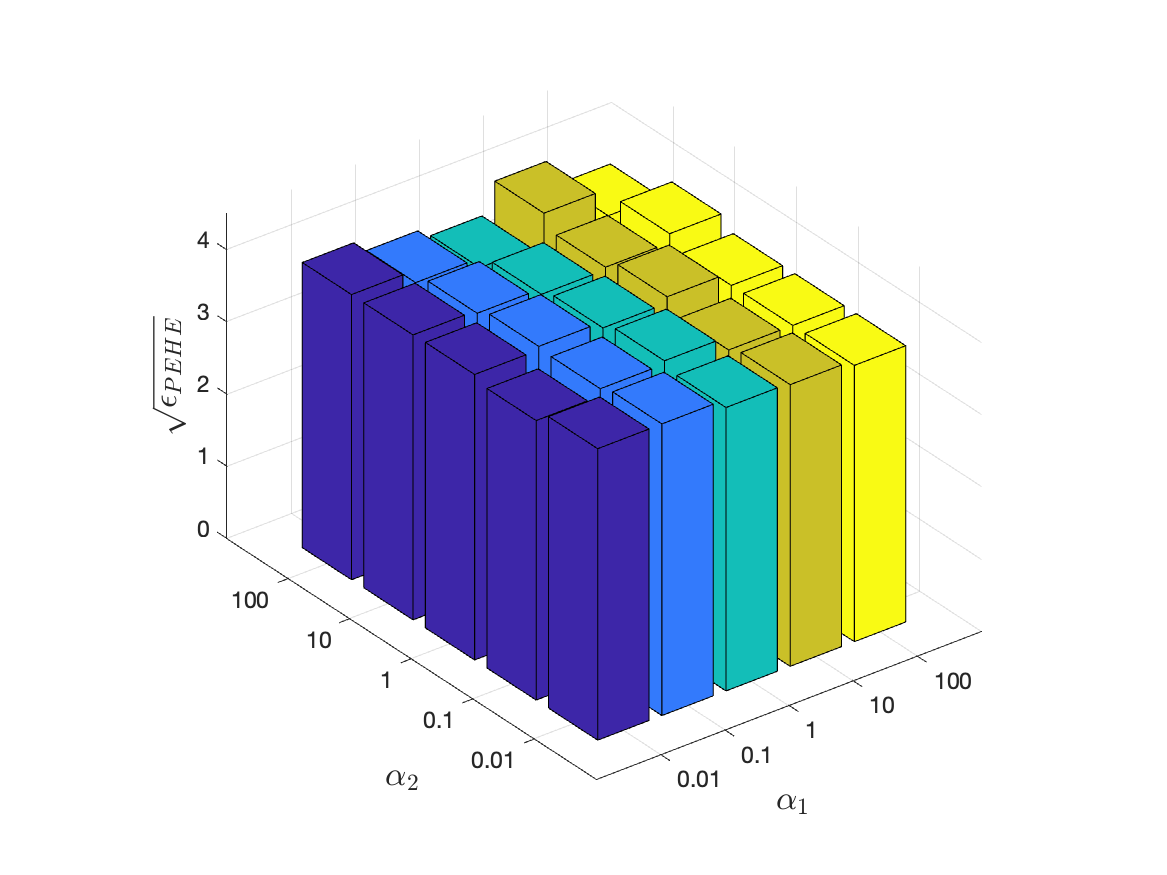}%设置插入的图大小和文件名，包括JPG,PNG,PDF,EPS等，放在源文件目录下
}
\quad
\subfigure[$\sqrt{\epsilon_{PEHE}}(\kappa_2=1)$]{
\includegraphics[width=3.5cm,trim=70 30 85 35,clip]{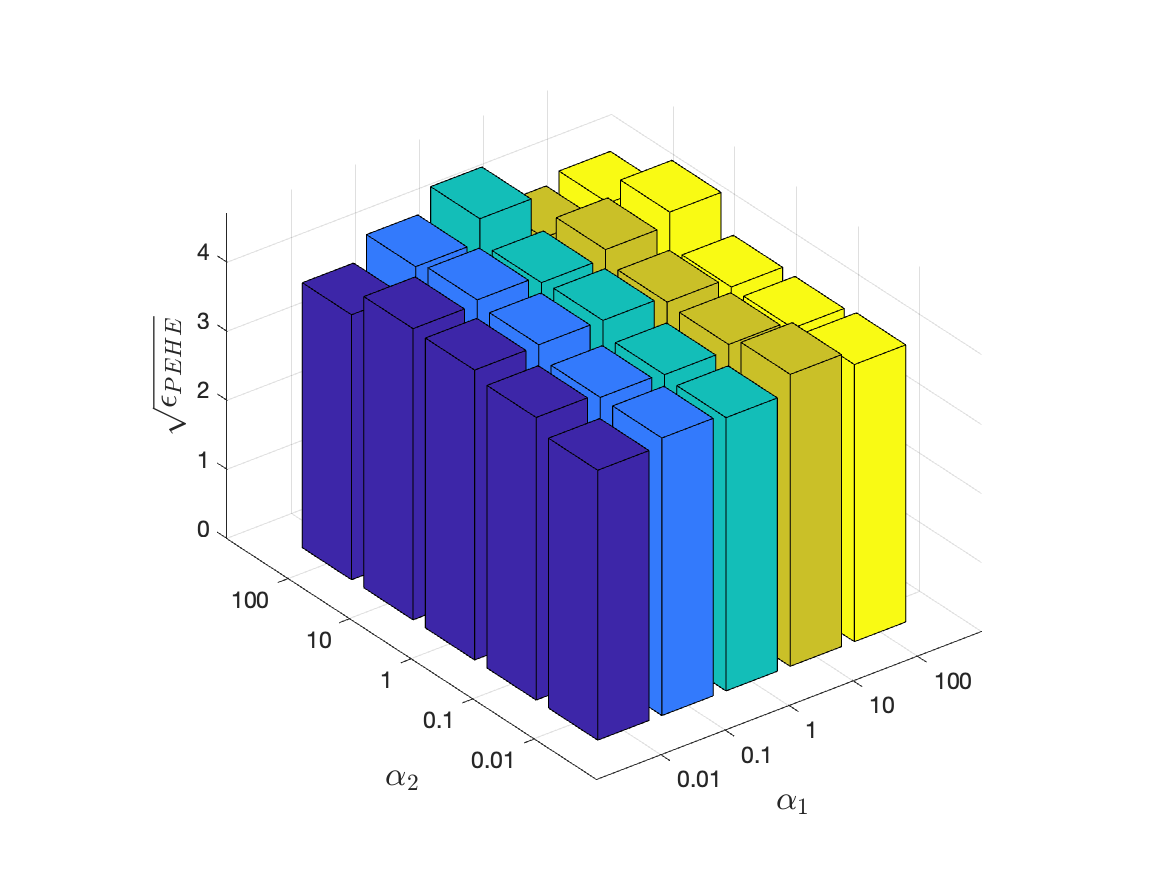}%设置插入的图大小和文件名，包括JPG,PNG,PDF,EPS等，放在源文件目录下
}
\quad
\subfigure[$\sqrt{\epsilon_{PEHE}}(\kappa_2=2)$]{
\includegraphics[width=3.5cm,trim=70 30 85 35,clip]{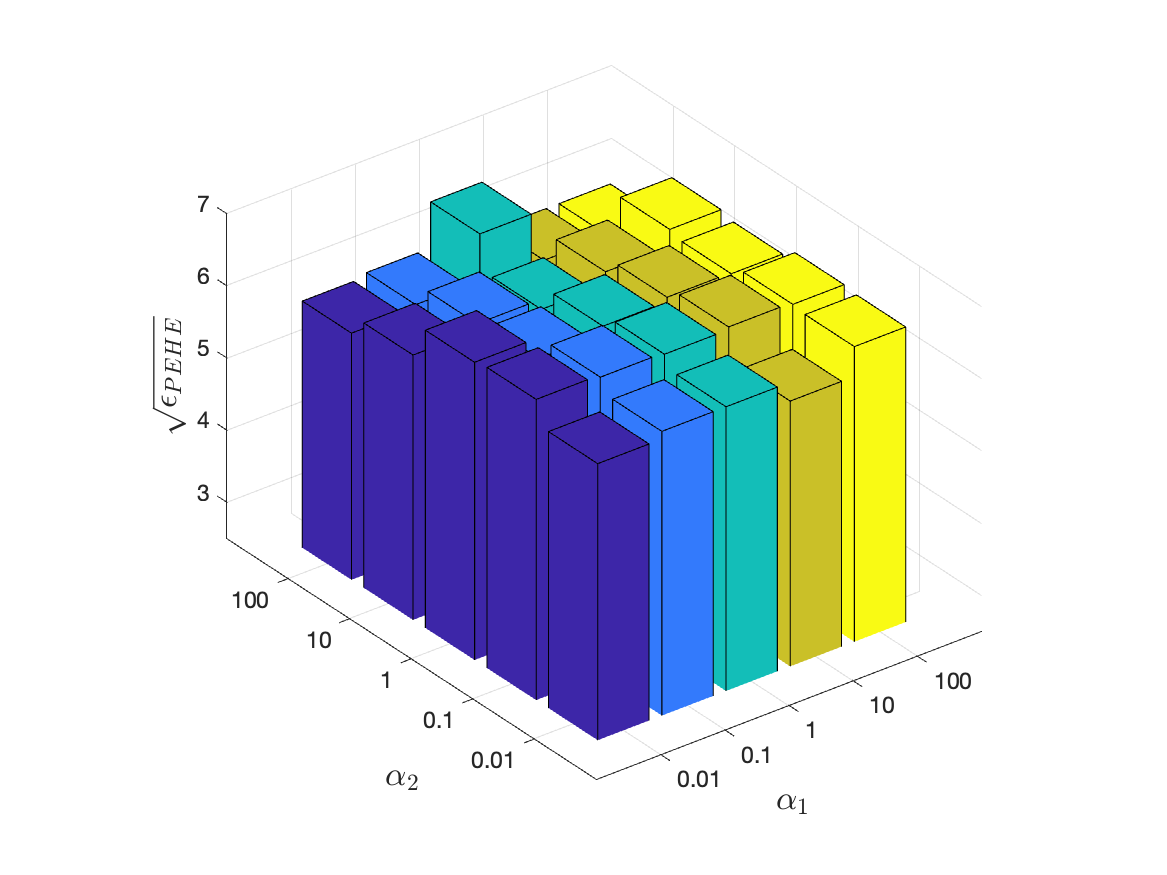}%设置插入的图大小和文件名，包括JPG,PNG,PDF,EPS等，放在源文件目录下
}
\quad
\subfigure[${\epsilon_{ATE}}(\kappa_2=0.5)$]{
\includegraphics[width=3.5cm,trim=70 30 85 35,clip]{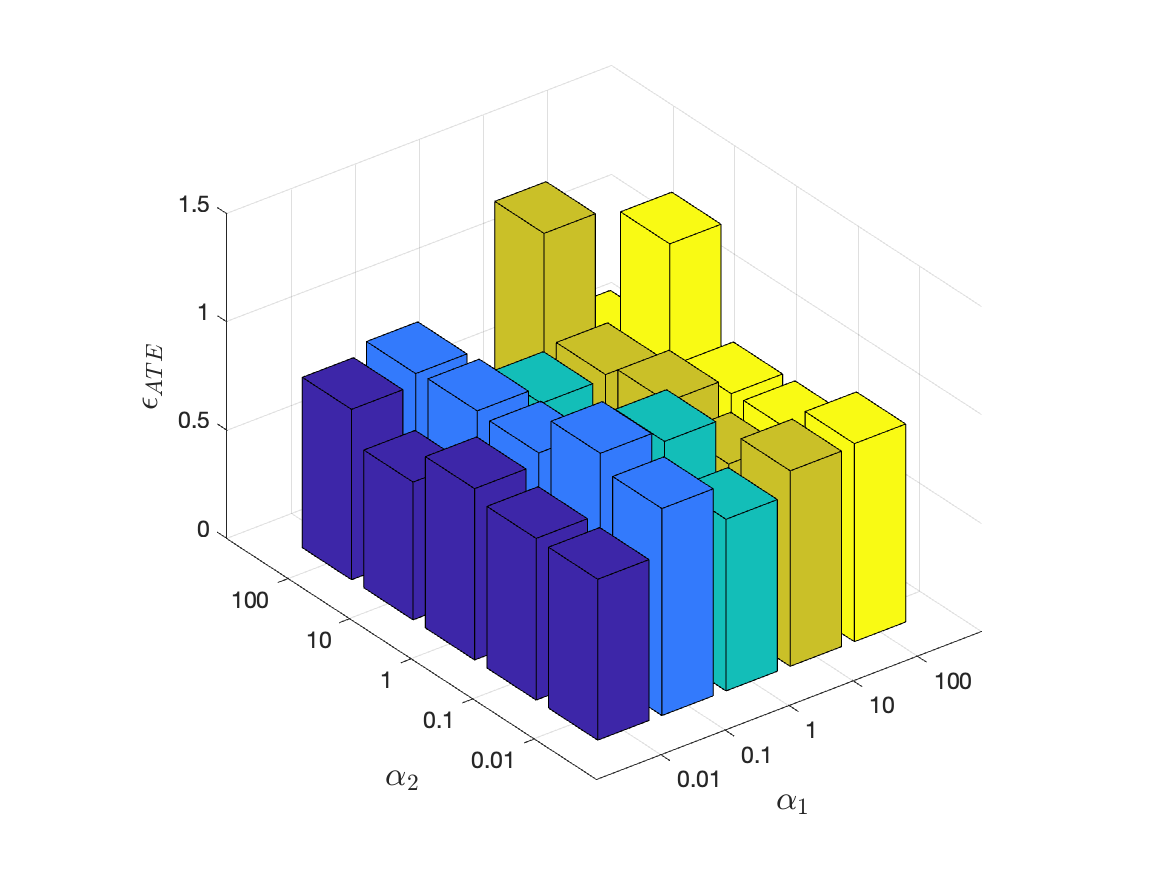}}%设置插入的图大小和文件名，包括JPG,PNG,PDF,EPS等，放在源文件目录下
\quad
\subfigure[${\epsilon_{ATE}}(\kappa_2=1)$]{
\includegraphics[width=3.5cm,trim=70 30 85 35,clip]{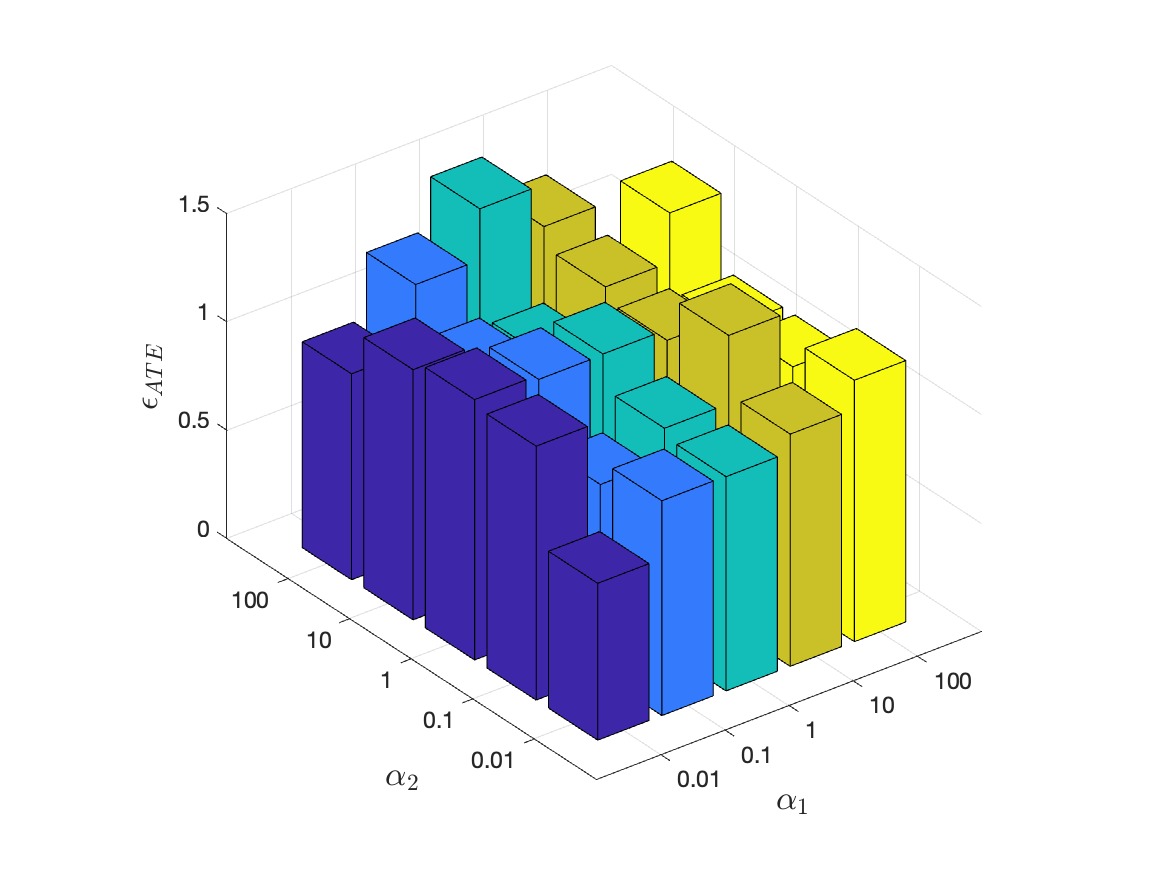}}%设置插入的图大小和文件名，包括JPG,PNG,PDF,EPS等，放在源文件目录下
\quad
\subfigure[${\epsilon_{ATE}}(\kappa_2=2)$]{
\includegraphics[width=3.5cm,trim=70 30 85 35,clip]{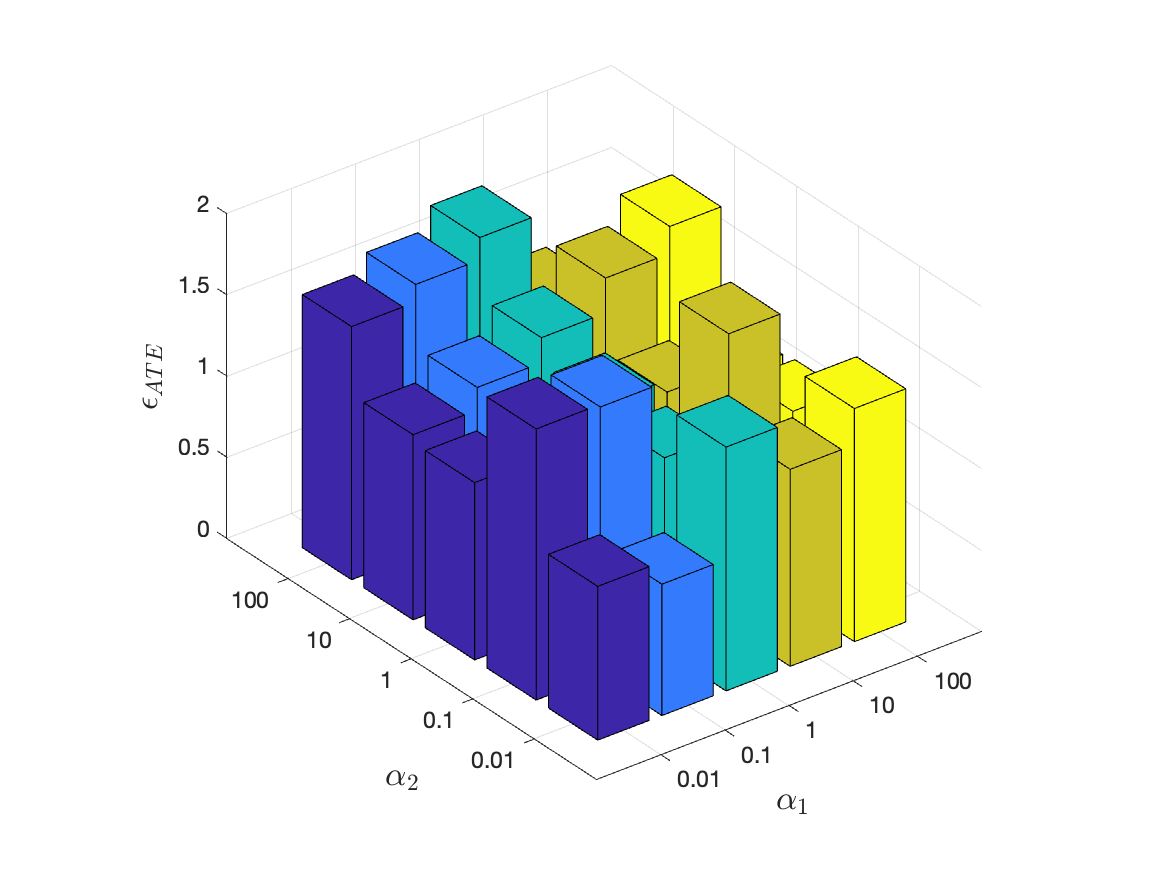}}%设置插入的图大小和文件名，包括JPG,PNG,PDF,EPS等，放在源文件目录下
\caption{Hyperparameter analysis on BlogCatalog across different $\kappa_2$.}%总标题
\Description{Hyperparameter analysis}
\label{fig:hyperparameter study-Blogcatalog}
\end{figure}
\begin{figure}[t]
\centering
\subfigure[$\sqrt{\epsilon_{PEHE}}(\kappa_2=0.5)$]{
\includegraphics[width=3.5cm,trim=70 30 85 35,clip]{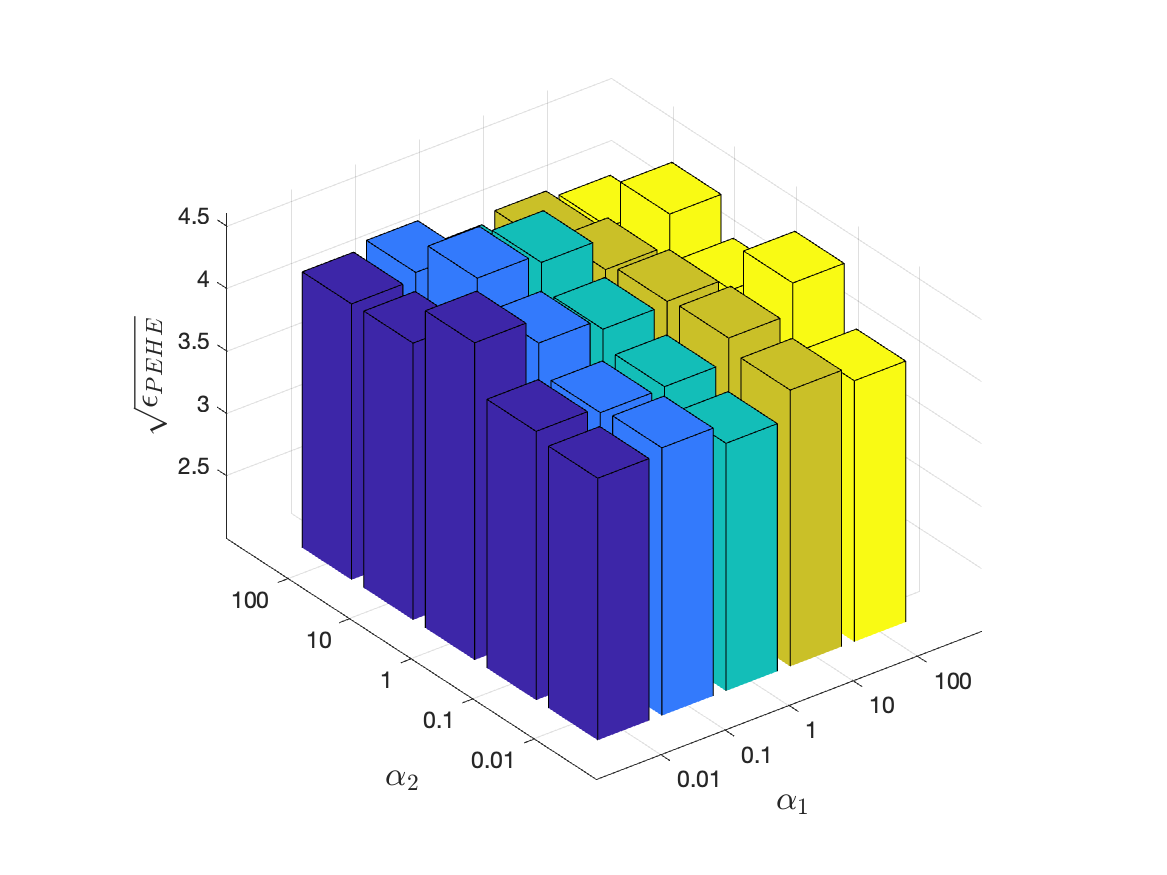}%设置插入的图大小和文件名，包括JPG,PNG,PDF,EPS等，放在源文件目录下
}
\quad
\subfigure[$\sqrt{\epsilon_{PEHE}}(\kappa_2=1)$]{
\includegraphics[width=3.5cm,trim=70 30 85 35,clip]{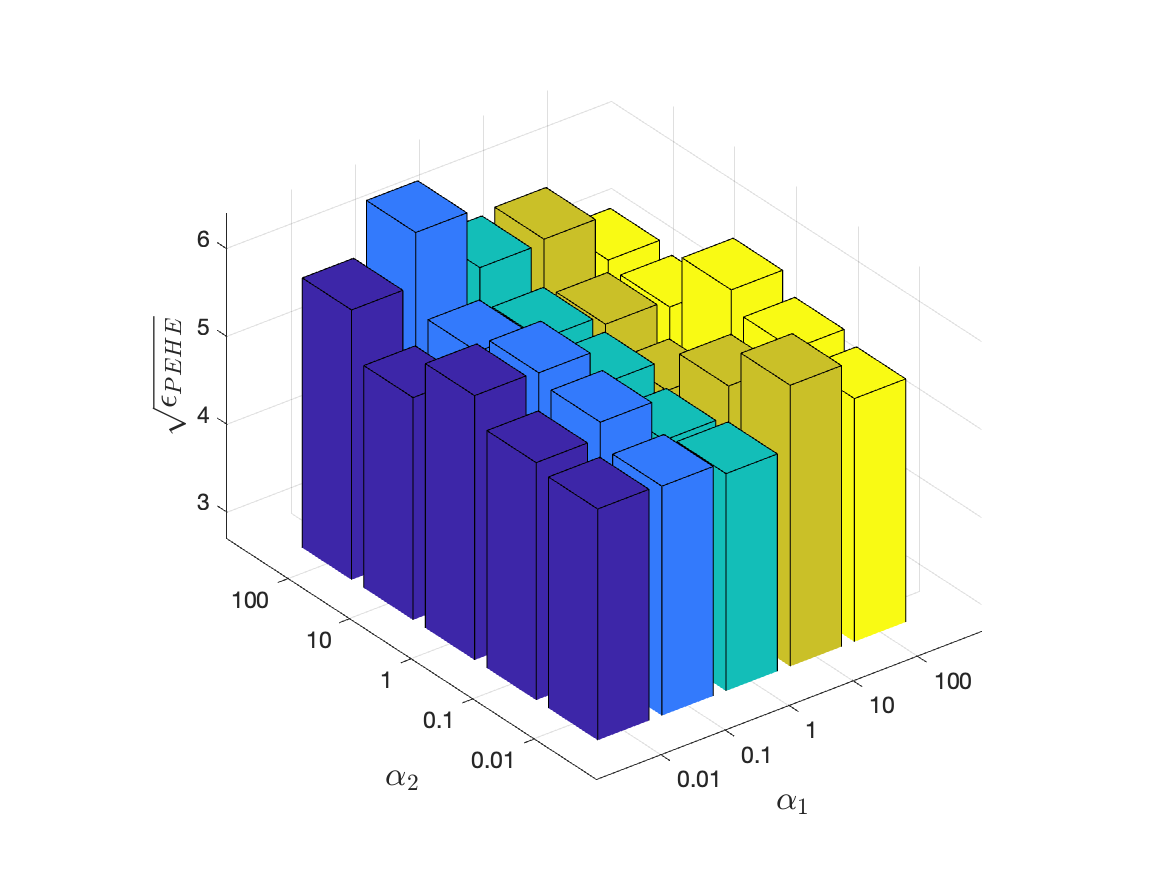}%设置插入的图大小和文件名，包括JPG,PNG,PDF,EPS等，放在源文件目录下
}
\quad
\subfigure[$\sqrt{\epsilon_{PEHE}}(\kappa_2=2)$]{
\includegraphics[width=3.5cm,trim=70 30 85 35,clip]{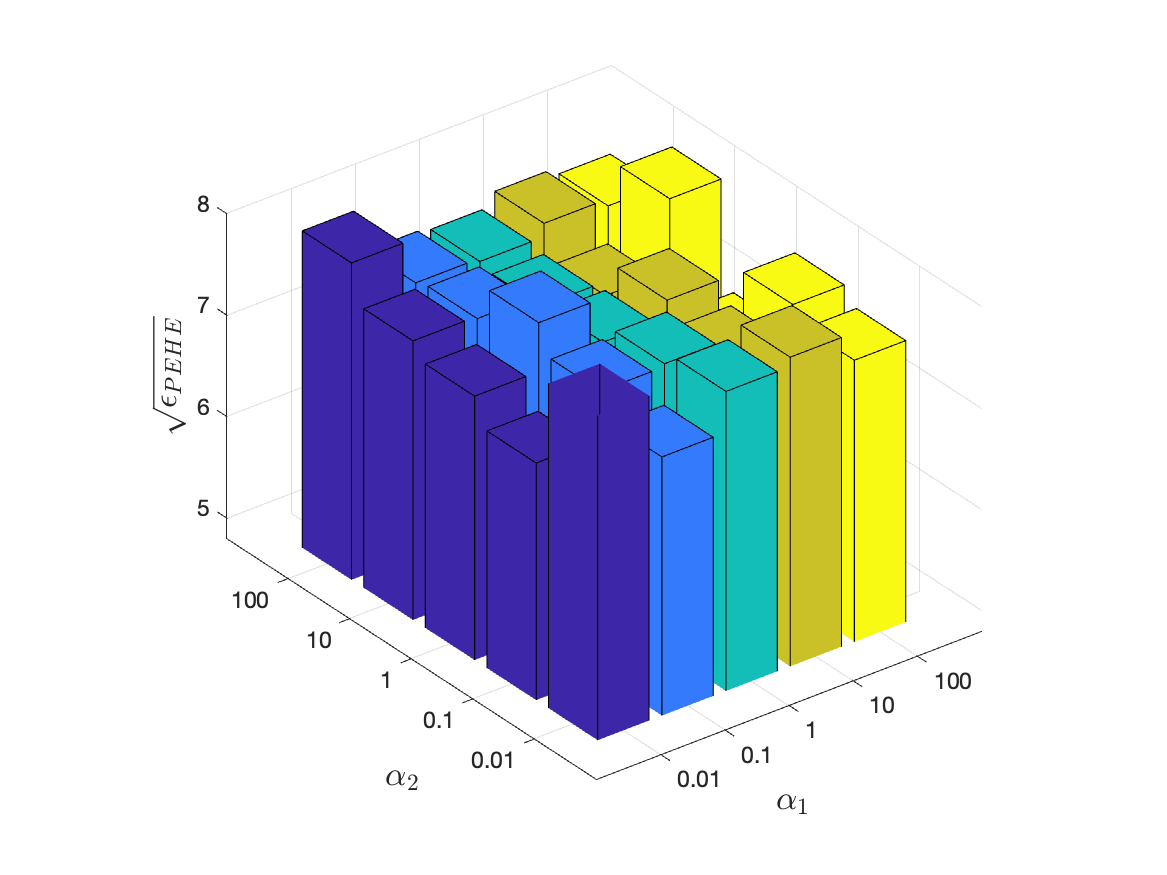}%设置插入的图大小和文件名，包括JPG,PNG,PDF,EPS等，放在源文件目录下
}
\quad
\subfigure[${\epsilon_{ATE}}(\kappa_2=0.5)$]{
\includegraphics[width=3.5cm,trim=70 30 85 35,clip]{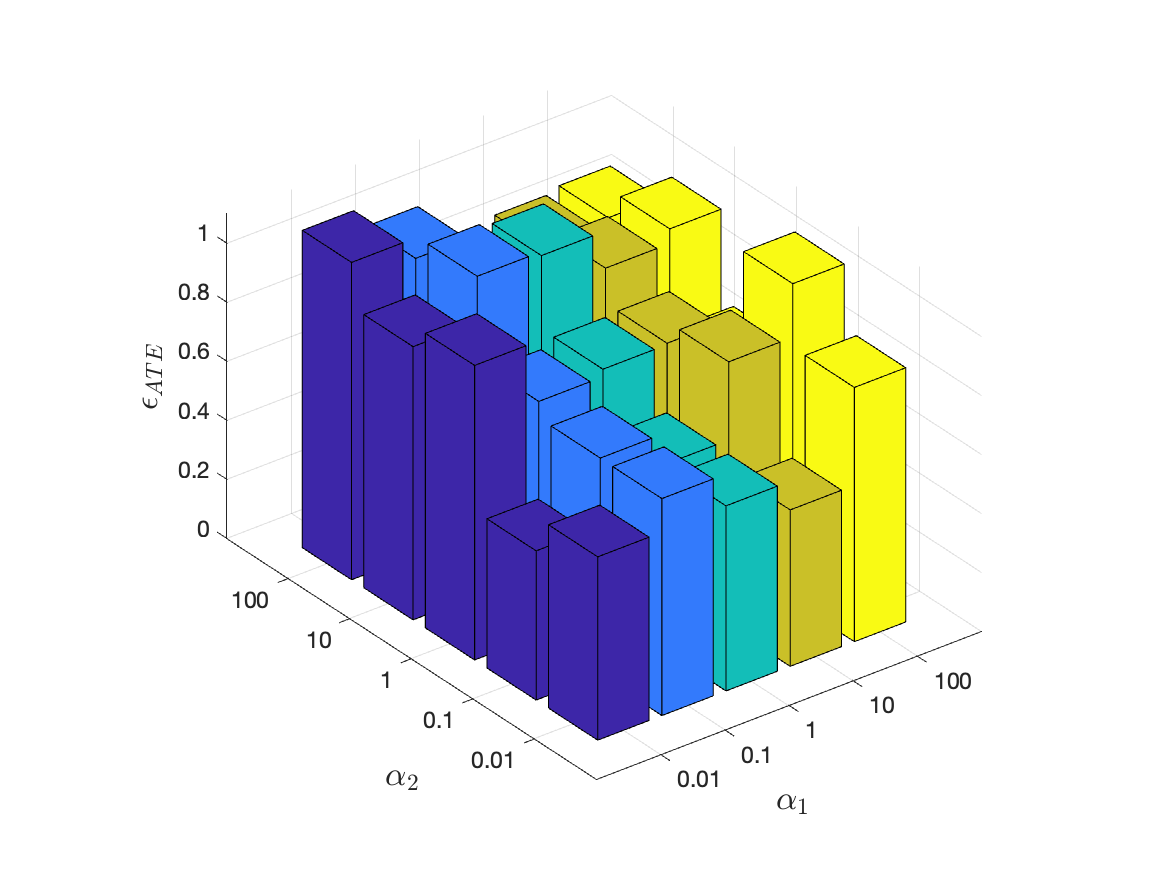}}%设置插入的图大小和文件名，包括JPG,PNG,PDF,EPS等，放在源文件目录下
\quad
\subfigure[${\epsilon_{ATE}}(\kappa_2=1)$]{
\includegraphics[width=3.5cm,trim=70 30 85 35,clip]{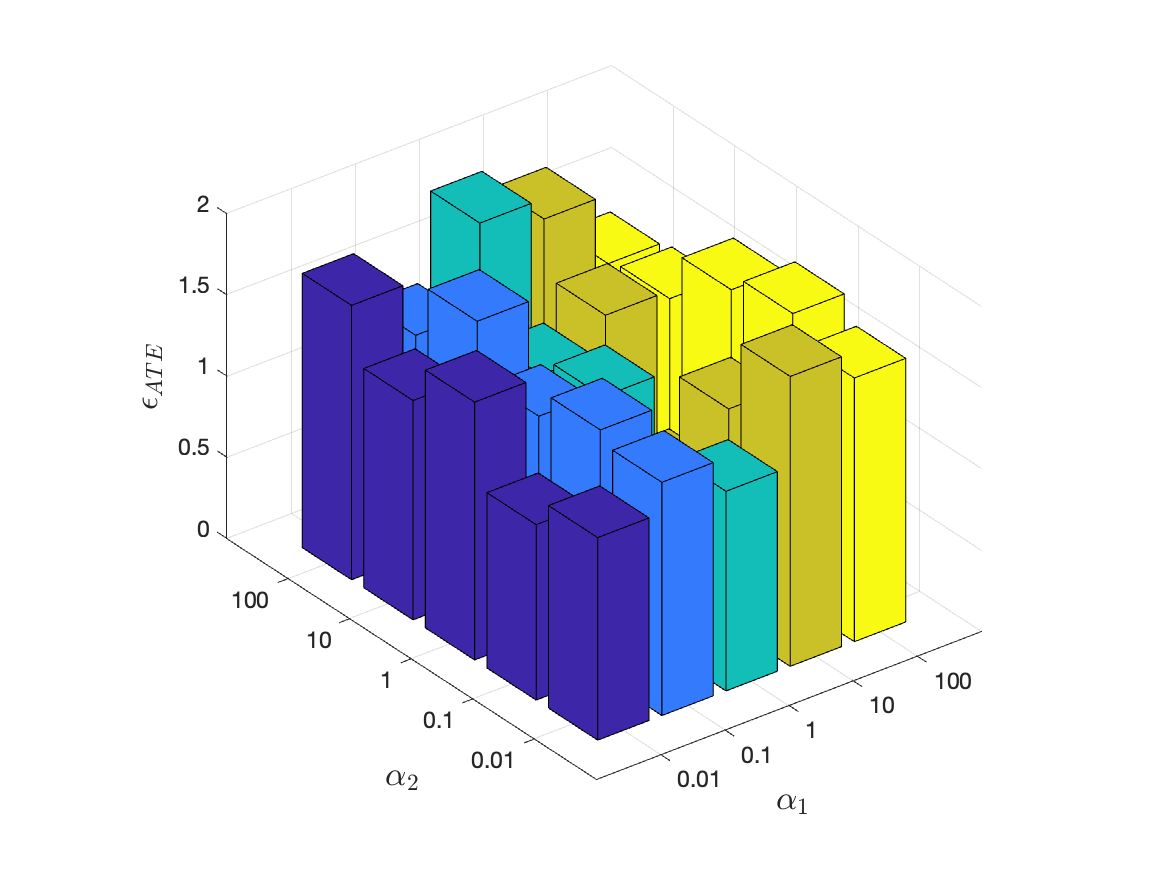}}%设置插入的图大小和文件名，包括JPG,PNG,PDF,EPS等，放在源文件目录下
\quad
\subfigure[${\epsilon_{ATE}}(\kappa_2=2)$]{
\includegraphics[width=3.5cm,trim=70 30 85 35,clip]{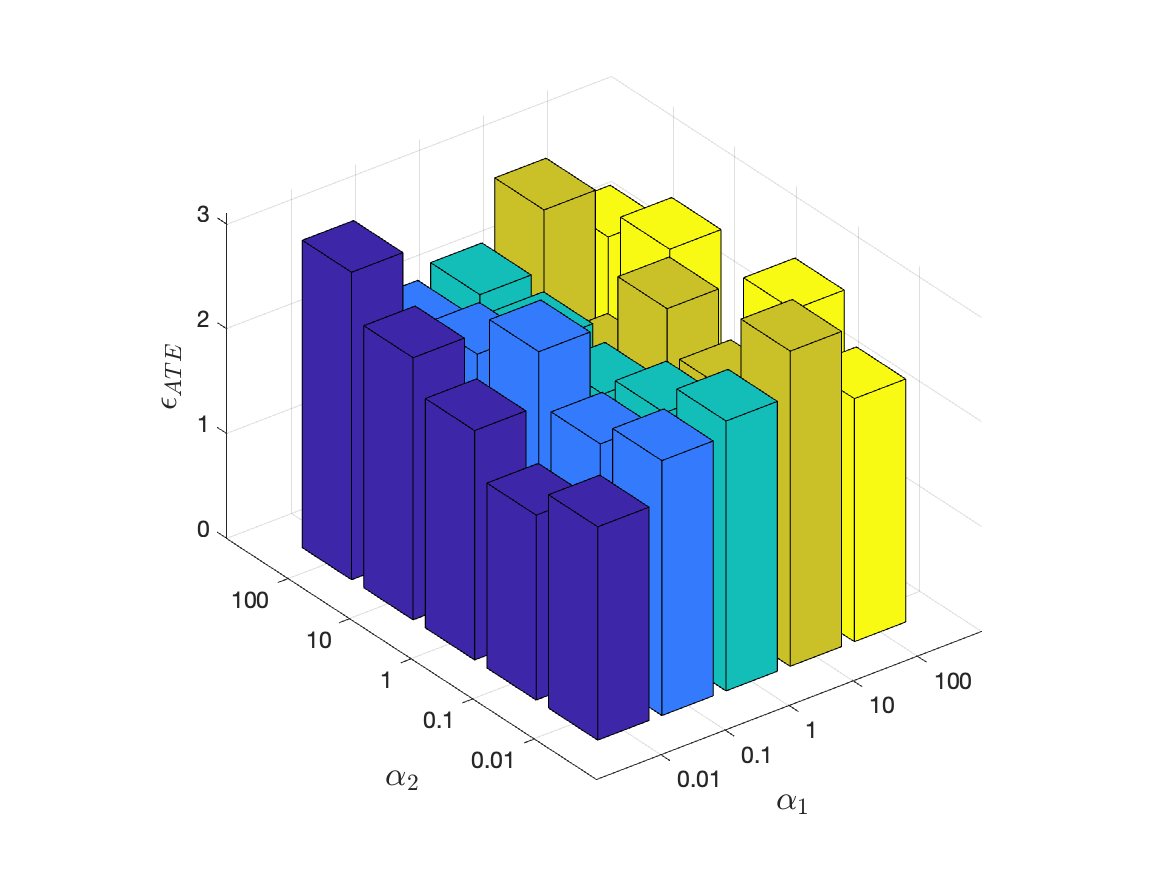}}%设置插入的图大小和文件名，包括JPG,PNG,PDF,EPS等，放在源文件目录下
\caption{Hyperparameter analysis on Flickr across different $\kappa_2$.}%总标题
\Description{Hyperparameter analysis Flickr}
\label{fig:hyperparameter study-Flickr}
\end{figure}
Tables \ref{tab:ablation study1} and \ref{tab:ablation study2} display the comparison results of the two variants with TNDVGA on the BlogCatalog and Flickr datasets, respectively. From the analysis, we can draw the following observations:
\begin{itemize}
    \item TNDVGA w/o BP cannot provide satisfactory performance because it neglects the balance of adjustment variables, which may lead to instrumental information being embedded in the adjustment variables, affecting the effectiveness of the learned representations. This highlights the necessity of balanced representations for better learning of latent factors in order to estimate ITE. 
    \item TNDVGA w/o HSIC also fails to provide the expected performance  and typically performs the worst, as it does not impose independence constraints on the representations corresponding to different latent factors. This indicates that imposing explicit independence constraints on the representations is important for estimating ITE from network observational data.
\end{itemize}
\subsection{Hyperparameter Study}
We conduct an analysis of the effects of the most important hyperparameters, $\alpha_1$ and $\alpha_2$, on the performance of TNDVGA. These parameters influence how independence constraints and representation balance contribute to the estimation of ITE from network observational data. The results of the parameter analysis for the BlogCatalog and Flickr datasets, with $\kappa_2$ set to 0.5, 1, and 2, are presented in terms of $\sqrt{\epsilon_{PEHE}}$ and $\epsilon _{ATE}$. We vary $\alpha_1$ and $\alpha_2$ within the range $\{0.01, 0.1, 1, 10, 100\}$. The results of the hyperparameter study are shown in Figs. \ref{fig:hyperparameter study-Blogcatalog} and \ref{fig:hyperparameter study-Flickr}. When $\alpha_1$ and $\alpha_2$ range in  $\{0.01, 0.1, 1\}$, the variations in $\sqrt{\epsilon_{PEHE}}$ and $\epsilon _{ATE}$ are minimal, suggesting that TNDVGA demonstrates stable and favorable performance across a wide range of parameter values.
% This suggests that TNDVGA is relatively insensitive to these two hyperparameters. 
However, when $\alpha_1\geq10$ or $\alpha_2\geq10$, TNDVGA’s performance in estimating $\epsilon_{ATE}$ noticeably declines. This reduction in performance occurs because the objective function places too much emphasis on the regularization term at high parameter settings, thereby affecting the accuracy of ATE estimation. 
% Similar findings and results are also observed in the Flickr datasets.

\section{conclusion and future work}\label{sec:conclusion and future work}
This paper aims to improve the accuracy of individual treatment effect estimation from networked observational data by modeling disentangled latent factors. The proposed model, TNDVGA, leverages observed features and auxiliary network information to infer and disentangle four distinct sets of latent factors: instrumental, confounding, adjustment, and noise factors. Empirical results from extensive experiments on two semi-synthetic and one synthetic networked datasets demonstrate that TNDVGA outperforms existing state-of-the-art methods in estimating ITE from networked observational data.

% Two promising directions for future work are worth exploring. First, we would like to extend TNDVGA to estimate treatment effects for multiple or continuous treatments, which would enhance its applicability to a wider range of real-world scenarios. Second, we are interested in further investigating ITE estimation under network interference within a generative model framework that employs variational inference.

Several promising directions for future work are worth exploring. First, we aim to extend TNDVGA to estimate treatment effects for multiple or continuous treatments, thereby broadening its applicability to a wider range of real-world scenarios. Second, we plan to investigate ITE estimation under network interference within a generative model framework that employs variational inference. Additionally, we intend to address the potential issue of posterior collapse in VAE, which may arise in the current model, and explore strategies to mitigate this challenge.

\bibliographystyle{ACM-Reference-Format}
\bibliography{sample-base}

% %%
% %% If your work has an appendix, this is the place to put it.
% \appendix

% \section{Research Methods}

% \subsection{Part One}

% Lorem ipsum dolor sit amet, consectetur adipiscing elit. Morbi
% malesuada, quam in pulvinar varius, metus nunc fermentum urna, id
% sollicitudin purus odio sit amet enim. Aliquam ullamcorper eu ipsum
% vel mollis. Curabitur quis dictum nisl. Phasellus vel semper risus, et
% lacinia dolor. Integer ultricies commodo sem nec semper.

% \subsection{Part Two}

% Etiam commodo feugiat nisl pulvinar pellentesque. Etiam auctor sodales
% ligula, non varius nibh pulvinar semper. Suspendisse nec lectus non
% ipsum convallis congue hendrerit vitae sapien. Donec at laoreet
% eros. Vivamus non purus placerat, scelerisque diam eu, cursus
% ante. Etiam aliquam tortor auctor efficitur mattis.

% \section{Online Resources}

% Nam id fermentum dui. Suspendisse sagittis tortor a nulla mollis, in
% pulvinar ex pretium. Sed interdum orci quis metus euismod, et sagittis
% enim maximus. Vestibulum gravida massa ut felis suscipit
% congue. Quisque mattis elit a risus ultrices commodo venenatis eget
% dui. Etiam sagittis eleifend elementum.

% Nam interdum magna at lectus dignissim, ac dignissim lorem
% rhoncus. Maecenas eu arcu ac neque placerat aliquam. Nunc pulvinar
% massa et mattis lacinia.

\end{document}